\newcommand{\norm}[1]{\left\lVert#1\right\rVert}
\DeclareMathOperator{\Var}{Var}
\DeclareMathOperator{\supp}{supp}
\DeclareMathOperator{\diag}{diag}
\DeclareMathOperator{\Hess}{Hess}
\DeclareMathOperator{\iso}{iso}
\DeclareMathOperator{\geo}{geo}
\DeclareMathOperator{\diam}{diam}
\DeclareMathOperator*{\argmax}{arg\,max}
\DeclareMathOperator*{\argmin}{arg\,min}
\DeclareFontFamily{OMX}{MnSymbolE}{}
\DeclareSymbolFont{MnLargeSymbols}{OMX}{MnSymbolE}{m}{n}
\DeclareFontShape{OMX}{MnSymbolE}{m}{n}{
	<-6>  MnSymbolE5
	<6-7>  MnSymbolE6
	<7-8>  MnSymbolE7
	<8-9>  MnSymbolE8
	<9-10> MnSymbolE9
	<10-12> MnSymbolE10
	<12->   MnSymbolE12
}{}
\DeclareFontShape{OMX}{MnSymbolE}{b}{n}{
	<-6>  MnSymbolE-Bold5
	<6-7>  MnSymbolE-Bold6
	<7-8>  MnSymbolE-Bold7
	<8-9>  MnSymbolE-Bold8
	<9-10> MnSymbolE-Bold9
	<10-12> MnSymbolE-Bold10
	<12->   MnSymbolE-Bold12
}{}
\let\llangle\@undefined
\let\rrangle\@undefined
\DeclareMathDelimiter{\llangle}{\mathopen}%
{MnLargeSymbols}{'164}{MnLargeSymbols}{'164}
\DeclareMathDelimiter{\rrangle}{\mathclose}%
{MnLargeSymbols}{'171}{MnLargeSymbols}{'171}
\begin{document}

\newtheorem{thm}{Theorem}
\newtheorem{cor}{Corollary}
\newtheorem{lemma}{Lemma}
\newtheorem{prop}{Proposition}

\theoremstyle{remark}
\newtheorem{rmk}{Remark}

\theoremstyle{definition}
\newtheorem{defn}{Definition}

\captionsetup[figure]{labelfont={bf},name={Figure},labelsep=period}

\title{\Large\textbf{Hessian Based Smoothing Splines for Manifold Learning}}
\author{Juno Kim}
\date{Dept. of Statistics, Seoul National University, South Korea\\[2ex]
	December 23, 2022}

\maketitle

\begin{abstract}
We propose a multidimensional smoothing spline algorithm in the context of manifold learning. We generalize the bending energy penalty of thin-plate splines to a quadratic form on the Sobolev space of a flat manifold, based on the Frobenius norm of the Hessian matrix. This leads to a natural definition of smoothing splines on manifolds, which minimizes square error while optimizing a global curvature penalty. The existence and uniqueness of the solution is shown by applying the theory of reproducing kernel Hilbert spaces. The minimizer is expressed as a combination of Green's functions for the biharmonic operator, and `linear' functions of everywhere vanishing Hessian. Furthermore, we utilize the Hessian estimation procedure from the Hessian Eigenmaps algorithm to approximate the spline loss when the true manifold is unknown. This yields a particularly simple quadratic optimization algorithm for smoothing response values without needing to fit the underlying manifold. Analysis of asymptotic error and robustness are given, as well as discussion of out-of-sample prediction methods and applications.
\end{abstract}

\section{Introduction}

The usual smoothing penalty associated to a $d$-dimensional thin-plate spline $f: \mathbb{R}^d \rightarrow \mathbb{R}$ is given as:
\begin{equation*}
	\mathcal{H}(f)=\int_{\mathbb{R}^d} \sum_{i,j=1}^d \left( \frac{\partial^2 f(\mathbf{x})}{\partial x_i \partial x_j}\right)^2 d\mathbf{x}
\end{equation*}
physically corresponding to the bending energy of the surface defined by $f$. Note that the integrand is the Frobenius norm of the Hessian matrix $\Hess_f(\mathbf{x})$ of $f$ at $\mathbf{x}$, so this penalty easily generalizes to functions on a flat $d$-dimensional Riemannian manifold $\mathcal{M}$, possibly with boundary:
\begin{equation*}
	\mathcal{H}^{\mathcal{M}}(f):= \int_{\mathcal{M}} \norm{\Hess_f(p)}_F^2 d\mu(p)
\end{equation*}
which is a quadratic form defined on the Sobolev space $W_2^2(\mathcal{M})$. The flatness of $\mathcal{M}$ is required in order for the Frobenius norm of the Hessian tensor to be well-defined in local orthonormal coordinates, and to avoid dealing with estimating curvature terms.

Suppose we are given observations $(p_i,y_i)$, $i=1,\cdots,N$, where $p_i\in\mathcal{M}$ and $y_i\in \mathbb{R}$. The data follow the rule $y_i=f(p_i) + \epsilon_i$, where the smooth function $f:\mathcal{M} \rightarrow \mathbb{R}$ is to be estimated, and the error terms are i.i.d. normally distributed with variance $\sigma^2$. We define the \textit{smoothing spline estimator} with smoothing parameter $\lambda$ as follows:
\begin{equation*}
	\hat{f}^{SS} = \argmin_{f\in W_2^2(\mathcal{M})} \mathcal{L}^{SS}(f;\lambda), \quad \mathcal{L}^{SS}(f;\lambda) = \sum_{i=1}^N (y_i-f(p_i))^2 + \lambda\cdot \mathcal{H}^{\mathcal{M}}(f)
\end{equation*}

In Sections 3 and 4, we study the minimizer analytically using the theory of reproducing kernel Hilbert spaces and derive $\hat{f}^{SS}$ in terms of Green's functions on $\mathcal{M}$.

\bigskip
Now consider the situation where $\mathcal{M}$ is unknown, and we are instead given data points $\mathbf{x}_i = \phi(p_i)$ embedded in some high-dimensional Euclidean feature space $\mathbb{R}^n$, $n>d$, via a locally isometric embedding $\phi: \mathcal{M} \rightarrow \mathbb{R}^n$. The observed values are now $y_i = g(\mathbf{x}_i) + \epsilon_i$, $g=f\circ \phi^{-1}$, and since both $\phi$ and $\mathcal{M}$ are unknown we cannot directly apply the above smoothing spline formulation.

However, the Hessian Eigenmaps algorithm in manifold learning \cite{hlle} provides us with a powerful method to estimate $\mathcal{H}^{\mathcal{M}}(f)$ using only the given data $\mathbf{X} = (\mathbf{x}_1 \cdots \mathbf{x}_N)^{\top}$, $\mathbf{y} = (y_1,\cdots,y_N)^{\top}$. We may compute a matrix $\hat{\mathcal{H}}^{\mathcal{M}}(\mathbf{X})$ approximating $\mathcal{H}^{\mathcal{M}}$ in the sense that:
\begin{equation*}
	\mathcal{H}^{\mathcal{M}}(f) \approx g(\mathbf{X})^{\top} \hat{\mathcal{H}}^{\mathcal{M}}(\mathbf{X}) g(\mathbf{X})
\end{equation*}
for any pair of functions $g=f\circ \phi^{-1}$ where $g(\mathbf{X}) = (g(\mathbf{x}_1),\cdots, g(\mathbf{x}_N))^{\top}$. In \cite{hlle}, this approximation is utilized to find the nullspace of $\mathcal{H}^{\mathcal{M}}$, which consists of the coordinate functions in the case where $\mathcal{M}$ is a subspace of $\mathbb{R}^d$. However, we do not need this additional assumption.

Thus, we may replace the loss $\mathcal{L}^{SS}(f;\lambda)$ by the ``Hessian spline'' loss
\begin{equation*}
	\mathcal{L}^{HS}(g;\lambda) := \norm{\mathbf{y} - g(\mathbf{X})}_2^2 + \lambda g(\mathbf{X})^{\top} \hat{\mathcal{H}}^{\mathcal{M}}(\mathbf{X}) g(\mathbf{X})
\end{equation*}
defined for functions $g: \phi(\mathcal{M}) \rightarrow \mathbb{R}$. The loss is quadratic and depends only on the values $g(\mathbf{X})$, so that the fitted values $\hat{g}^{HS}(\mathbf{X})$ may be computed simply as:
\begin{equation*}
	\hat{g}^{HS}(\mathbf{X}) = (I_N + \lambda \hat{\mathcal{H}}^{\mathcal{M}}(\mathbf{X}))^{-1} \mathbf{y}
\end{equation*}
and out-of-sample estimates $\hat{g}(\mathbf{x})$ for $\mathbf{x}$ (assumed to be) lying on $\phi(\mathcal{M})$ may be calculated using various neighborhood interpolation techniques.

If we are additionally given weights $\mathbf{w}=(w_1,\cdots,w_N)^{\top}$ representing the reliability of each datum $y_i$, the weighted loss
\begin{equation*}
	\mathcal{L}^{HS}(g;\lambda,\mathbf{w}) = \sum_{i=1}^N w_i(y_i-g(\mathbf{x}_i))^2 + \lambda g(\mathbf{X})^{\top} \hat{\mathcal{H}}^{\mathcal{M}}(\mathbf{X}) g(\mathbf{X})
\end{equation*}
is minimized by
\begin{equation*}
	\hat{g}^{HS}(\mathbf{X}) = (\mathbf{W} + \lambda  \hat{\mathcal{H}}^{\mathcal{M}}(\mathbf{X}))^{-1} \mathbf{W} \mathbf{y},
\end{equation*}
where $\mathbf{W} = \diag (\mathbf{w})$.

\bigskip
There have been many previous studies concerning splines on more general spaces than subsets of $\mathbb{R}^n$. Splines on the sphere have been extensively researched from various perspectives \cite{bk1, bk2}, including but not limited to spherical harmonics \cite{freeden}, B\'ezier polynomials \cite{jekeli}, and thin-plate splines \cite{beatson}. Penalties on the sphere usually involve the norm of the Laplacian, so that the kernel consists of harmonic functions; however, this gives no penalty to functions of potentially high curvature, which we remedy by using the Hessian formulation.

One-dimensional cubic splines on curved surfaces were first studied in \cite{noakes}. Jupp and Kent \cite{jk} developed the so-called unroll-unwrap technique for fitting a one-dimensional path to points on a sphere, which has been extended to homogeneous spaces \cite{pauley}, and recently to general Riemannian manifolds \cite{krkim}, where a wide range of references on geodesic analysis can be found. Gu et al. \cite{gu} study splines on subdivision surfaces in $\mathbb{R}^3$, and develop the equivalent of the $B$-spline algorithm in this case. Hofer et al. \cite{hofer} also compute splines on parametric surfaces and triangle meshes of sampled points. Some theoretical results on interpolating splines on smooth and algebraic manifolds are given in Chapter 6 of \cite{bez}.
Our approach of a nonlinear generalization of thin-plate splines via estimation of the Hessian is new. For other methods of regression in the context of manifold learning, see also local linear regression \cite{cheng}, intrinsic polynomial regression \cite{hinkle}, and extrinsic local regression \cite{linl}.

Concerning nearest-neighbor methods: throughout the paper, we will work with a neighborhood $U_{\text{Euc},\mathbf{x}}$ (in $\mathbb{R}^n$) of an embedded data point $\mathbf{x}$ to model the neighborhood $U_{\mathbf{x}}$ in the original manifold $\mathcal{M}$. The neighborhood will consist of the closest points with respect to \textit{Euclidean} distance in $\mathbb{R}^n$, and the geometry of $U_{\text{Euc},\mathbf{x}}$ is implicitly taken to be a close approximation of the geometry of $U_{\mathbf{x}}$. This assumption is valid, as under mild regularity conditions, the embedded Euclidean distance approximates geodesic distance up to order $<3$ \cite{hein}:
\begin{equation*}
	d^{\mathcal{M}}(\mathbf{x},\mathbf{y}) = \norm{\mathbf{x}-\mathbf{y}} + O(\norm{\mathbf{x}-\mathbf{y}}^3), \quad \forall \mathbf{y}\in U_{\mathbf{x}}.
\end{equation*}

\section{Theory}

\subsection{Cubic Splines}

We first discuss the most basic problem of interpolating data in one dimension. For large datasets, polynomial interpolation of increasingly higher order does not always improve model accuracy, as exemplified by Runge's phenomenon. Instead, piecewise polynomial interpolants (splines) are used.

\begin{defn}
	Given data (`knots') $(x_i,y_i)$, $i=1,\cdots,N$ with $x_1<\cdots<x_N$, $f:[x_1,x_N] \rightarrow \mathbb{R}$ is an \textit{interpolating spline of degree $k$} if $f(x_i) = y_i$, $f$ restricted to each interval $[x_i,x_{i+1}]$ is a polynomial of order at most $k$, and $f\in C^{k-1}[x_1,x_N]$.
\end{defn}

In this paper, we will study the case $k=3$. Since each cubic polynomial has 4 coefficients, $f$ has a total of $4(N-1)$ parameters to be computed. The $C^2$ conditions at each knot yield $3(N-2)$ equations, while $f(x_i) = y_i$ give another $N$. Further requiring $f''(x_1) = f''(x_N) = 0$ determines the so-called natural cubic spline which extends linearly beyond the endpoint knots.

\bigskip
The cubic spline can be formulated as the solution of an optimization problem \cite{silver}. Namely, we wish to minimize the integral $\int (f'')^2$ over an interval $[a,b]$ containing $[x_1,x_N]$, over the function space $\mathcal{F} = \{f\in C^2[a,b]: f(x_i)=y_i,\; i=1,\cdots,N\}$. This penalizes how nonlinear or `curvy' the interpolating function $f$ is.

Let $f$ be the natural cubic spline and let $\tilde{f} \in \mathcal{F}$ be any other interpolating function. Setting $g = f-\tilde{f}$, a straightforward calculation shows:
\begin{equation*}
	\int_a^b f''(x)g''(x) dx = - \int_{x_1}^{x_N} f'''(x)g'(x) dx = - \sum_{j=1}^{N-1} f'''(x_j^+) (g(x_{j+1})-g(x_j)) = 0
\end{equation*}
so that $\displaystyle \int_a^b \tilde{f}''(x)^2 dx - \int_a^b f''(x)^2 dx = \int_a^b g''(x)^2 dx \geq 0$, which verifies our claim.

\bigskip
Various algorithms exist to efficiently compute one-dimensional natural cubic splines, such as \cite{aosp} or the method of B-spline bases \cite{htf}.

\subsection{Thin-Plate Splines}

Thin-plate splines (TPS) is a multidimensional spline-based data smoothing technique with applications in geometric design, introduced in \cite{duchon}. One fits a regression function $f$ to given data, whereby the bending energy of the surface defined by $f$ is minimized whilst also approximating the data as best as possible. TPS is closely related to the elastic maps technique \cite{elastic} in nonlinear dimensionality reduction.

We approach this problem in a rather informal manner, by using the calculus of variations. A more mathematically precise formulation requires the theory of reproducing kernel Hilbert spaces (RKHS), which we present and apply to our more general setting in Section 4.

In general, let $J(f)$ be a functional of the following integral form:
\begin{equation*}
	J(f) = \int_a^b L(x,f,f',f'') dx
\end{equation*}

Suppose $f$ is a minimizer of $J$. If we vary $f$ in the direction of another function $g$ such that $g$ and $g'$ evaluates to 0 at $a,b$, then $t \mapsto J(f+tg)$ must have a minimum at $t=0$. Thus the following formal derivative must be zero:
\begin{align*}
	\lim_{t\rightarrow 0} \frac{J(f+tg)-J(f)}{t} &= \int_a^b \lim_{t\rightarrow 0} \frac{1}{t} \left[ L(x,f+tg,f'+tg',f''+tg'')-L(x,f,f',f'') \right] dx \\
	&= \int_a^b \left( \frac{\partial L}{\partial f} g + \frac{\partial L}{\partial f'} g' + \frac{\partial L}{\partial f''} g'' \right) dx \\
	&= \int_a^b \left( \frac{\partial L}{\partial f} - \frac{d}{dx} \left( \frac{\partial L}{\partial f'} \right) + \frac{d^2}{dx^2} \left( \frac{\partial L}{\partial f''} \right) \right) g \; dx
\end{align*}
via integration by parts. Thus we obtain the analogue of the Euler-Lagrange equation for functionals involving order 2 derivatives:
\begin{equation*}
	\frac{\partial L}{\partial f} - \frac{d}{dx} \left( \frac{\partial L}{\partial f'} \right) + \frac{d^2}{dx^2} \left( \frac{\partial L}{\partial f''} \right) = 0
\end{equation*}

\noindent
For $L(x,f,f',f'') = (f'')^2$, this becomes $f^{(4)}=0$, which should be interpreted as holding almost everywhere: that is, $f$ is piecewise cubic.

\bigskip
The theory extends readily to the multivariate case. Let $f(\mathbf{x}) = f(x_1,\cdots,x_d)$ be a $d$-dimensional function with derivatives $\partial f / \partial x_j = f_{x_j}$, $\partial^2 f / \partial x_j \partial x_k = f_{x_jx_k}$ and so on. If the integrand $L$ depends on all $x_j$, $f$, $f_{x_j}$, and $f_{x_jx_k}$, the Euler-Lagrange equation becomes:

\begin{equation} \label{elmult}
	\frac{\partial L}{\partial f} - \sum_{j=1}^d \frac{d}{dx_j} \left( \frac{\partial L}{\partial f_{x_j}} \right) + \sum_{j=1}^d \sum_{k=1}^d \frac{d}{dx_j} \frac{d}{dx_k} \left( \frac{\partial L}{\partial f_{x_jx_k}} \right) = 0
\end{equation}

Now suppose we are given data $(\mathbf{x}_i,y_i)$, $i=1,\cdots,N$ with $\mathbf{x}_i \in \mathbb{R}^d$. The smoothed thin-plate spline is defined as the minimizer of:
\begin{equation*}
	\mathcal{L}^{TPS}(f;\lambda) = \sum_{i=1}^N (y_i - f(\mathbf{x}_i))^2 + \lambda \int_{\mathbb{R}^d} \sum_{i=1}^N \sum_{j=1}^N f_{x_ix_j}^2
\end{equation*}
where the first term is the squared fitting error, the second term measures the nonlinear distortion of $f$, and the `smoothing parameter' $\lambda$ controls the tradeoff between the two losses. When $\lambda \rightarrow 0$, the TPS interpolates the data perfectly; when $\lambda \rightarrow \infty$, the TPS approaches multivariate linear regression.

The TPS loss can be reformulated as
\begin{equation*}
	\mathcal{L}^{TPS}(f;\lambda) = \int_{\mathbb{R}^d} \left( \sum_{i=1}^N (y_i - f(\mathbf{x}))^2 \delta(\mathbf{x} - \mathbf{x}_i) + \lambda \sum_{i=1}^N \sum_{j=1}^N f_{x_ix_j}^2(\mathbf{x}) \right) d\mathbf{x}
\end{equation*}

\noindent
Applying equation (\ref{elmult}), we obtain, in terms of the Laplacian $\Delta = \sum_{j=1}^d \partial^2 / \partial x_j^2$:
\begin{equation} \label{biharmonic}
	\sum_{i=1}^N(f(\mathbf{x}) - y_i) \delta(\mathbf{x} - \mathbf{x}_i) + \lambda \Delta^2 f(\mathbf{x}) = 0
\end{equation}

\noindent
Equation (\ref{biharmonic}) can be solved using Green's functions of the biharmonic operator $\Delta^2$, that is, solutions of
\begin{equation*}
	\Delta^2 G(\mathbf{x},\mathbf{s}) = \delta(\mathbf{x}-\mathbf{s}) \quad \forall \mathbf{x}\in \mathbb{R}^d
\end{equation*}

\noindent
Taking $G$ to be of the form $G(\mathbf{x},\mathbf{s}) = \rho(|\mathbf{x}  -\mathbf{s}|)$, one can show:
\begin{equation*} \rho(r) = 
\begin{cases}
\frac{1}{8\pi} r^2 \log r & d=2 \\
- \frac{1}{8\pi^2} \log r & d=4 \\
\frac{1}{16\pi^{d/2}} \Gamma(\frac{d}{2}-2) r^{4-d} & \text{else}
\end{cases}
\end{equation*}

\noindent
Note that $\rho$ has an undesirable singularity at 0 for dimensions $d\geq 4$, so modified radial basis functions are usually chosen as to be bounded.

The solution to (\ref{biharmonic}) which minimizes the TPS loss may be expressed as a combination of Green's functions at the data $\mathbf{x}_i$ and a linear component as in \cite{tps},
\begin{equation*}
	\hat{f}(\mathbf{x}) = \sum_{i=1}^N a_i G(\mathbf{x},\mathbf{x}_i) + b_0 + \sum_{j=1}^d b_j x_j
\end{equation*}
which yields the following equation (in the distributional sense),
\begin{equation*}
	\sum_{i=1}^N (\hat{f}(\mathbf{x}) - y_i + \lambda a_i) \delta(\mathbf{x} - \mathbf{x}_i) = 0
\end{equation*}

\noindent
Thus we must have $\hat{f}(\mathbf{x}_i) - y_i + \lambda a_i = 0$ for all $i$. Let $\mathbf{G}$ be the $N\times N$ matrix with entries $G(\mathbf{x}_i, \mathbf{x}_j)$, and $\mathbf{X}$ the $N\times (d+1)$ design matrix with rows $(1\;\; \mathbf{x}_i^{\top})$.
Let $\mathbf{a}$ and $\mathbf{b}$ denote the column $N-$ and $(d+1)-$ vectors formed by $a_i$ and $b_j$, respectively. The conditions can then be rewritten as:
\begin{equation*}
	\mathbf{G}\mathbf{a} + \mathbf{X}\mathbf{b} - \mathbf{y} + \lambda \mathbf{a} = 0
\end{equation*}
with the additional orthogonality condition $\mathbf{X}^{\top} \mathbf{a} = 0$, see \cite{tps}. This is solved by
\begin{equation*}
	\hat{\mathbf{b}} = \left( \mathbf{X}^{\top} (\mathbf{G} + \lambda \mathbf{I}_N)^{-1} \mathbf{X} \right)^{-1} \mathbf{X}^{\top} (\mathbf{G} + \lambda \mathbf{I}_N)^{-1} \mathbf{y}, \quad \hat{\mathbf{a}} = (\mathbf{G} + \lambda \mathbf{I}_N)^{-1} (\mathbf{y} - \mathbf{X} \hat{\mathbf{b}})
\end{equation*}

The smoothing parameter $\lambda$ is usually chosen by cross-validation criteria. Writing $\hat{f}^{(-i)}$ as the TPS fitted to the same data but with the $i$th observation $(\mathbf{x}_i,y_i)$ deleted, the CV index is defined as a sum of squared errors (possibly weighted):
\begin{equation*}
	CV(\lambda) = \frac{1}{N}\sum_{i=1}^N \left(y_i - \hat{f}^{(-i)}(\mathbf{x}_i)\right)^2
\end{equation*}
and $\lambda$ is chosen as the value minimizing this index. See \cite{aosp} for a related discussion of cross-validation.

\subsection{Hessian Eigenmaps}

In the setting of manifold learning, we assume that our data, distributed in a high-dimensional Euclidean \textit{feature space} $\mathbb{R}^n$, actually lies on or near a lower-dimensional Riemannian submanifold $\mathcal{M}$. We aim to recover $\mathcal{M}$ up to some class of interest, the embedding map $\psi$ into $\mathbb{R}^n$, and the corresponding representations $\psi^{-1}(\mathbf{x}_i)$ of the data.

Hessian Eigenmaps, developed by Donoho and Grimes \cite{hlle}, is a popular nonlinear dimensionality reduction method to achieve this task. The algorithm supposes that the embedded (\textit{articulation}) submanifold is locally isometric to an open, connected subset $\Theta \subseteq \mathbb{R}^d$ (the \textit{parameter space}). The locally isometric embedding is denoted by $\psi: \Theta \rightarrow \psi(\Theta) = \mathcal{M} \subset \mathbb{R}^n$. Given data $\mathbf{x}_i \in \mathbb{R}^n$, $i=1,\cdots,N$, the algorithm retrieves the original coordinates $(x^j \circ \psi^{-1})(\mathbf{x}_i)$, $j=1,\cdots,d$.

Note these are weaker conditions compared to other widely-used algorithms such as Isomap or LLE, which require $\Theta$ to be convex and $\psi$ to be a global isometry \cite{mlta}.

\bigskip
Recall that for a Riemannian manifold $\mathcal{M}$ with Levi-Civita connection $\nabla$, the Hessian tensor of a smooth function $f:\mathcal{M} \rightarrow \mathbb{R}$ is defined as $\Hess_f := \nabla \nabla f$, a section of the bundle $T^*\mathcal{M} \otimes T^*\mathcal{M}$. In local coordinates $\{x^j\}$ around $p\in\mathcal{M}$, the Hessian is expressed in Einstein notation as:
\begin{equation*}
	\Hess_f(p) = \left( \frac{\partial^2 f}{\partial x^i \partial x^j}(p) - \Gamma^k_{ij} \frac{\partial f}{\partial x^k}(p) \right) dx^i \otimes dx^j
\end{equation*}
where $\Gamma_{ij}^k$ are the Christoffel symbols of $\nabla$. For our problem of interest, since $\mathcal{M}$ is locally isometric to Euclidean space, by Theorema Egregium we have $\Gamma_{ij}^k = 0$.

\bigskip
Let $p\in\mathcal{M}$ and $U_p$ a suitable neighborhood of $p$. There are different ways to define local coordinates on $U_p$ for which the Hessian may be computed:
\begin{enumerate}
	\item \textit{Isometric} coordinates: by pushing forward the Euclidean coordinates $x^j$ on $\Theta$, $U_p$ inherits the coordinate system $\theta^j(q) = (x^j \circ \psi^{-1})(q)$. The associated Hessian matrix is denoted by $\Hess_f^{\tan}(p)$.
	
	\item \textit{Normal} coordinates: the exponential map $\exp_p: T_p\mathcal{M} \supset V \rightarrow \mathcal{M}$ provides a diffeomorphism between $U_p$ and a subset of the tangent space. Viewing $T_p\mathcal{M}$ as an affine subspace of $\mathbb{R}^n$, we may define any set of orthonormal coordinates (the choice will not matter) on $T_p\mathcal{M}$, which in turn defines coordinates on $U_p$. The associated Hessian matrix is denoted by $\Hess_f^{\geo}(p)$.
	
	\item \textit{Tangent} coordinates: Again viewing $T_p\mathcal{M}$ as sitting inside $\mathbb{R}^n$, the linear projection map $\pi_p: U_p \rightarrow T_p\mathcal{M}$ provides another diffeomorphism between $U_p$ and a subset of $T_p\mathcal{M}$. The associated Hessian matrix is denoted by $\Hess_f^{\tan}(p)$. Note that only this definition can provide a tractable estimation scheme.
\end{enumerate}

We are concerned only with the Frobenius norms of the above matrices, which are invariant with respect to orthogonal coordinate transformations and thus well-defined. It can be shown that
\begin{equation*}
	\lVert{\Hess_f^{\tan}(p)\rVert}_F = \lVert{\Hess_f^{\geo}(p)\rVert}_F = \lVert{\Hess_f^{\iso}(p)\rVert}_F
\end{equation*}

Finally, let $d\mu$ be a probability measure on $\mathcal{M}$ with positive density everywhere on the interior of $\mathcal{M}$ from which our data is sampled. Our object of interest is the following integral
\begin{equation*}
	\mathcal{H}^{\mathcal{M}}(f):=\int_{\mathcal{M}}\nolimits \lVert{\Hess_f^{\tan}(p)\rVert}_F^2 d\mu(p),
\end{equation*}
which is a quadratic form on the Sobolev space $W_2^2(\mathcal{M})$ and gives a measure of the average `curviness' of $f$ over $\mathcal{M}$. Considering the isometric formulation, $\mathcal{H}^{\mathcal{M}}$ clearly evaluates to zero for functions linear in the $x^j$. In fact, the converse statement is also true.

\begin{thm} \label{nullspace}
	The quadratic form $\mathcal{H}^{\mathcal{M}}$ on $W_2^2(\mathcal{M})$ has a $(d+1)$-dimensional null-space $\ker \mathcal{H}^{\mathcal{M}}$ spanned by the constant function and the isometric coordinates $\theta^i$.
\end{thm}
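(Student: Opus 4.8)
The plan is to reduce the statement to an elementary fact about Sobolev functions on the open connected parameter set $\Theta$. The first step is to transport everything from $\mathcal{M}$ to $\Theta$ along the diffeomorphism $\psi$. Pullback by $\psi$ identifies $W_2^2(\mathcal{M})$ with the corresponding Sobolev space on $\Theta$, carries the interior of $\mathcal{M}$ onto $\Theta$ (the boundary being $\mu$-null), and, most importantly, sends the isometric Hessian matrix $\Hess_f^{\iso}(p)$ to the ordinary coordinate Hessian $\big(\partial^2 (f\circ\psi)/\partial x^i\partial x^j\big)_{i,j}$ of $f\circ\psi$ on $\Theta$. This last point is exactly where flatness enters: in the isometric chart the metric is the Euclidean one, so all Christoffel symbols vanish (Theorema Egregium, as already observed), and the connection Hessian reduces to the matrix of second partials. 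Writing $\nu = \psi^*\mu$, we obtain $\mathcal{H}^{\mathcal{M}}(f) = \int_\Theta \norm{\Hess_{f\circ\psi}(x)}_F^2\, d\nu(x)$.

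Since $\nu$ has strictly positive density on all of $\Theta$ and the integrand is nonnegative, $\mathcal{H}^{\mathcal{M}}(f) = 0$ holds if and only if every weak second partial derivative of $f\circ\psi$ vanishes almost everywhere on $\Theta$. I would then invoke the standard fact that a locally integrable function on a connected open subset of $\mathbb{R}^d$ whose weak gradient vanishes a.e.\ is a.e.\ constant: applied to each first partial $\partial(f\circ\psi)/\partial x^i$ it produces constants $b_i$, and then applied to $f\circ\psi - \sum_i b_i x^i$ it produces a constant $b_0$, so that $f\circ\psi = b_0 + \sum_i b_i x^i$ and hence $f = b_0 + \sum_i b_i\theta^i$. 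This shows $\ker\mathcal{H}^{\mathcal{M}} \subseteq \operatorname{span}\{1,\theta^1,\dots,\theta^d\}$. The reverse inclusion is immediate since these functions have identically vanishing coordinate Hessian, and they are linearly independent because a nontrivial affine function cannot vanish on the nonempty open set $\Theta$; together these give $\dim\ker\mathcal{H}^{\mathcal{M}} = d+1$.

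I do not anticipate a genuine obstacle, only a few bookkeeping points, the most substantive being the passage to $W_2^2$ regularity. One must check that pullback by $\psi$ is an isomorphism of the relevant Sobolev spaces that respects null sets; this is routine since $\psi$ is a diffeomorphism and can be handled by localizing with a countable atlas on which the derivatives of $\psi$ and $\psi^{-1}$ are bounded. One should also note that membership $\theta^i\in W_2^2(\mathcal{M})$ is implicit in the hypotheses (automatic if $\Theta$ is bounded, or if the Sobolev space is taken with respect to $\mu$ with suitable tails). The connectedness of $\Theta$, part of the Hessian Eigenmaps setup, is essential: it is precisely what rules out functions that are affine on each piece but not globally affine. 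If one prefers to avoid citing the Sobolev lemma outright, the same conclusion follows by mollifying $f\circ\psi$ on compactly contained subdomains, where the mollifications are genuinely affine, and passing to the limit.
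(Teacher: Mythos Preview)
Your proposal is correct and follows essentially the same route as the paper's sketch: pull back along $\psi$ to $\Theta$, use flatness to identify the connection Hessian with the coordinate Hessian (the paper phrases this as the equality $\lVert\Hess_f^{\tan}\rVert_F=\lVert\Hess_f^{\iso}\rVert_F$), and then invoke the Euclidean fact that vanishing second derivatives on a connected open set force affinity. The only substantive addition on your part is the careful handling of the $W_2^2$ case via weak derivatives and the standard Sobolev lemma, whereas the paper's sketch works with $C^2$ functions and leaves the extension implicit; this is a welcome elaboration rather than a different argument.
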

\begin{proof}
	(Sketch.) The statement is first proved for $C^2$ functions on Euclidean space, then the natural pullback from $\mathcal{M}$ to $\Theta$ is used to extend the result to $\mathcal{M}$ by the equality
	$\lVert{\Hess_f^{\tan}(p)\rVert}_F = \lVert{\Hess_f^{\iso}(p)\rVert}_F$.
\end{proof}

This allows us to retrieve the original parametric coordinates $\theta^j(\mathbf{x}_i)$ of the embedded data by constructing a discrete estimator $\hat{\mathcal{H}}^{\mathcal{M}}$ and studying its eigenstructure. The estimator will be an $N\times N$ matrix, depending on the data $\mathbf{X} = (\mathbf{x}_1 \cdots \mathbf{x}_N)^{\top}$, with the property that for any $C^2$ function $f: \mathcal{M} \rightarrow \mathbb{R}$, one may approximate
\begin{equation} \label{happrox}
	\mathcal{H}^{\mathcal{M}}(f) \approx f(\mathbf{X})^{\top} \hat{\mathcal{H}}^{\mathcal{M}}(\mathbf{X}) f(\mathbf{X})
\end{equation}

\bigskip
\textbf{Deriving the Hessian estimator}

Denote by $N(\mathbf{x}_i)$ the ordered set of the $K(\geq d)$ data points closest to $\mathbf{x}_i$ with respect to the distance in $\mathbb{R}^n$. This serves as a proxy for the open neighborhood $U_{\mathbf{x}_i}$. We work with the idea that the local tangent space $T_{\mathbf{x}_i} \mathcal{M} \subset \mathbb{R}^n$ is best represented by the affine subspace $\hat{T}(\mathbf{x}_i)$ least-squares fitted to $N(\mathbf{x}_i)$. Applying Principal Component Analysis (PCA) to the subset $N(\mathbf{x}_i)$ gives a direct estimate of the local tangent coordinates. That is, the eigenvectors $\mathbf{u}_1^{(i)}, \cdots, \mathbf{u}_d^{(i)}$ corresponding to the largest $d$ eigenvalues of the $K \times K$ Gram matrix $(\mathbf{G}^{(i)})_{jk} = (\mathbf{x}_j - \mathbf{x}_i)^{\top} (\mathbf{x}_k - \mathbf{x}_i)$ give the orthogonally projected coordinates from $N(\mathbf{x}_i)$ to $\hat{T}(\mathbf{x}_i)$. For noisy data, one may apply a robust version of PCA.

At each $\mathbf{x}_i$, we find a $({}_d\mathrm{H}_2 \times K)$-matrix $\mathbf{H}^{(i)}$ that approximates the tangent coordinate-based Hessian in the sense that for any $f\in C^2(\mathcal{M})$,
\begin{equation*}
	\mathbf{H}^{(i)}f^{(i)},\quad \text{where } f^{(i)} = (\cdots,f(\mathbf{x}_j),\cdots)^{\top}, \quad \mathbf{x}_j\in N(\mathbf{x}_i)
\end{equation*}
is a length $d(d+1)/2$ vector whose entries approximate each
\begin{equation*}
	(\Hess_f^{\tan}(\mathbf{x}_i))_{\alpha\beta} = \frac{\partial^2 f}{\partial x^{\alpha} \partial x^{\beta}} (\mathbf{x}_i), \quad 1\leq \alpha \leq \beta \leq d
\end{equation*}

In particular, for each row index $(\alpha\beta)$ corresponding to the index pair $\alpha, \beta$, we want the following estimation scheme to hold:
\begin{equation}\label{scheme}
	\frac{\partial^2 f}{\partial x^{\alpha} \partial x^{\beta}} (\mathbf{x}_i) \approx \sum_j  \mathbf{H}_{(\alpha \beta) ,\, j}^{(i)} f(\mathbf{x}_j) \quad \forall f\in C^{\infty}(\mathcal{M})
\end{equation}

Write $\epsilon_{j}^{(i)}:=\mathbf{x}_j-\mathbf{x}_i$. Substituting $f(\mathbf{x}_j)$ by its second order Taylor expansion $f(\mathbf{x}_i)+ \sum_k \frac{\partial f}{\partial x^k}(\mathbf{x}_i)\, \epsilon_{j,k}^{(i)} + \frac{1}{2} \sum_{k,\ell} \frac{\partial^2 f}{\partial x^k \partial x^{\ell}}(\mathbf{x}_i)\, \epsilon_{j,k}^{(i)} \epsilon_{j,\ell}^{(i)} \,$ above gives
\begin{align*}
	\frac{\partial^2 f}{\partial x^{\alpha} \partial x^{\beta}} (\mathbf{x}_i) &\simeq \left( \sum_j \mathbf{H}_{(\alpha \beta) ,\, j}^{(i)}\right) f(\mathbf{x}_i) + \sum_k \frac{\partial f}{\partial x^k}(\mathbf{x}_i) \left( \sum_j \mathbf{H}_{(\alpha \beta) ,\, j}^{(i)} \epsilon_{j,k}^{(i)} \right) \\ & + \frac{1}{2} \sum_{k, \ell} \frac{\partial^2 f}{\partial x^k \partial x^{\ell}}(\mathbf{x}_i) \left( \sum_j \mathbf{H}_{(\alpha \beta) ,\, j}^{(i)} \epsilon_{j,k}^{(i)} \epsilon_{j,\ell}^{(i)} \right)
\end{align*}

Since each tangent $k$-coordinate $\epsilon_{*,k}^{(i)}$ is collected into $\mathbf{u}_k^{(i)}$, we thus require the relations:
\begin{equation*}
	\mathbf{H}_{(\alpha \beta)}^{(i)\,\,\top} 1_K = 0, \quad \mathbf{H}_{(\alpha \beta)}^{(i)\,\,\top} \mathbf{u}_k^{(i)} = 0, \quad \mathbf{H}_{(\alpha \beta)}^{(i)\,\,\top} \left( \mathbf{u}_k^{(i)}*\mathbf{u}_{\ell}^{(i)} \right) = 2\delta_{k,\ell}^{\alpha, \beta}
\end{equation*}
where $1_K$ is the vector consisting of $K$ ones, and $*$ denotes entrywise multiplication of vectors of equal length. This system may be solved by performing Gram-Schmidt orthogonalization on the following $K \times (1+d+d(d+1)/2)$-matrix:
\begin{equation*}
	\left( 1_K \,\Big|\, \mathbf{u}_k^{(i)} \cdots \,\Big|\, \mathbf{u}_k^{(i)}*\mathbf{u}_{\ell}^{(i)} \cdots \right)
\end{equation*}
and taking the last $d(d+1)/2$ columns as the corresponding rows $\mathbf{H}_{(\alpha \beta)}^{(i)\,\,\top}$, suitably normalized. Finally, we construct $\hat{\mathcal{H}}^{\mathcal{M}}(\mathbf{X})$ via a form of contraction,
\begin{equation*}
	\big( \hat{\mathcal{H}}^{\mathcal{M}}(\mathbf{X})\big)_{jm} := \frac{1}{N} \sum_i \sum_{\alpha, \beta} \mathbf{H}_{(\alpha \beta),\,j}^{(i)} \mathbf{H}_{(\alpha \beta),\,m}^{(i)}
\end{equation*}

It is straightforward to check that $\hat{\mathcal{H}}^{\mathcal{M}}(\mathbf{X})$ has our desired property.
\begin{align*}
	f(\mathbf{X})^{\top} \hat{\mathcal{H}}^{\mathcal{M}}(\mathbf{X}) f(\mathbf{X}) &= \frac{1}{N} \sum_{i,j,m} \sum_{\alpha, \beta} f(\mathbf{x}_j) f(\mathbf{x}_m) \mathbf{H}_{(\alpha \beta),\,j}^{(i)} \mathbf{H}_{(\alpha \beta),\,m}^{(i)} \\
	&\approx \frac{1}{N} \sum_i \sum_{\alpha, \beta} \left( \frac{\partial^2 f}{\partial x^{\alpha} \partial x^{\beta}} (\mathbf{x}_i) \right)^2 \\
	&= \frac{1}{N} \sum_i \lVert{\Hess_f^{\tan}(\mathbf{x}_i)\rVert}_F^2 \\
	&\approx \int_{\mathcal{M}}\nolimits \lVert{\Hess_f^{\tan}(p)\rVert}_F^2 d\mu(p)
\end{align*}

Due to the approximations involved, the quadratic form represented by $\hat{\mathcal{H}}^{\mathcal{M}}(\mathbf{X})$ no longer satisfies the statement of Theorem \ref{nullspace} exactly. Nevertheless, the original coordinate functions $\theta^j$ must still give small values, assuming the input functions are normalized so that $\norm{f(\mathbf{X})}=1$. Thus, computing its spectral decomposition and taking the eigenvectors corresponding to the $d$ smallest eigenvalues should yield approximate values for $\theta^j(\mathbf{x}_i)$. Note that the algorithm requires pre-chosen values for $d$ and $K$, unlike algorithms such as Isomap where $d$ may be chosen by a scree test after the spectral decomposition step.

\section{Smooth Manifold Splines}

Let $\mathcal{M}$ be a connected $d$-dimensional flat Riemannian manifold, not necessarily a subset of $\mathbb{R}^d$.
Suppose we are given observations $(p_i,y_i)$, $i=1,\cdots,N$, where $p_i\in\mathcal{M}$ and $y_i\in \mathbb{R}$. The $p_i$ are sampled from a probability density $\mu$ on $\mathcal{M}$, and the observations follow the model $y_i=f(p_i) + \epsilon_i$ where the error terms are i.i.d. $N(0,\sigma^2)$. We define the \textit{smoothing spline estimator} $\hat{f}^{SS}$ for $f$ with a smoothing parameter $\lambda \geq 0$ as
\begin{equation} \label{ssloss2}
	\hat{f}^{SS} = \argmin_{f\in W_2^2(\mathcal{M})} \mathcal{L}^{SS}(f;\lambda), \quad \mathcal{L}^{SS}(f;\lambda) = \sum_{i=1}^N (y_i-f(p_i))^2 + \lambda\cdot \mathcal{H}^{\mathcal{M}}(f)
\end{equation}

Before evaluating the model, we first discuss the global penalty term $\mathcal{H}^{\mathcal{M}}(f)$. What does it mean for $\mathcal{H}^{\mathcal{M}}(f)$ to be zero, that is for the Hessian to vanish everywhere, or more generally have small value? While we mentioned that $\mathcal{H}^{\mathcal{M}}$ measures a certain `curviness' of the function $f$, and its null-space is characterized by the linear functions if $\mathcal{M} \subset \mathbb{R}^d$, the weaker condition of flatness is worth studying in more detail.

For example, consider a function with vanishing Hessian on the flat cylinder $\mathcal{M} = \mathbb{R} \times (\mathbb{R}/\mathbb{Z})$. Restricting to the subset $\mathbb{R} \times (0,1)$, $f$ must be linear of the form $ax+by+c$ by Theorem \ref{nullspace}. Continuity $f(\,\cdot\,,0)=f(\,\cdot\,,1)$ then forces $b=0$, so $\ker \mathcal{H}^{\mathbb{R}/\mathbb{Z}}$ is spanned by the constants and $(x,y) \mapsto x$. More generally, any flat complete manifold has Euclidean space as universal cover, which permits their classification into quotient spaces with respect to actions by Bieberbach groups (torsionfree crystallographic groups) \cite{bbb}. Restricting to a fundamental domain $\mathcal{F} \subset \mathbb{R}^d$, any function with vanishing Hessian must be linear in $1, x^1,\cdots,x^d$. Continuity conditions at the boundary $\partial \mathcal{F}$ `kill off' or introduce relations within these generators, one per each pair of identified piecewise smooth components, reducing the dimension of $\mathcal{N}^{\mathcal{M}} := \ker \mathcal{H}^{\mathcal{M}}$.

\begin{prop}\label{affine}
	$\Hess_f=0$ iff for every geodesic $\gamma$, $f(\gamma(t))$ is an affine function of $t$.
\end{prop}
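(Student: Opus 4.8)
The plan is to reduce the equivalence to a single pointwise identity relating the second derivative of $f$ along a geodesic to the Hessian quadratic form, after which both directions follow by elementary linear algebra. Let $\gamma$ be any (affinely parametrized) geodesic, so that $\nabla_{\dot\gamma}\dot\gamma = 0$. Since $\frac{d}{dt}f(\gamma(t)) = df(\dot\gamma(t))$, differentiating once more along $\gamma$ and using $\Hess_f = \nabla df$ gives
\[
	\frac{d^2}{dt^2} f(\gamma(t)) = (\nabla_{\dot\gamma} df)(\dot\gamma) + df(\nabla_{\dot\gamma}\dot\gamma) = \Hess_f(\dot\gamma(t),\dot\gamma(t)),
\]
the last term vanishing precisely because $\gamma$ is a geodesic. (Equivalently, in isometric coordinates the Christoffel symbols vanish, geodesics are straight segments $t\mapsto a+tv$, and this is just $\frac{d^2}{dt^2}f(a+tv) = v^{\top}\Hess_f(a+tv)\,v$.) Hence $t\mapsto f(\gamma(t))$ is affine if and only if $\Hess_f(\dot\gamma(t),\dot\gamma(t)) \equiv 0$ along $\gamma$.

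The forward implication is then immediate: if $\Hess_f \equiv 0$ the right-hand side above is identically zero for every geodesic, so $f\circ\gamma$ has vanishing second derivative and is affine. For the converse, suppose $f\circ\gamma$ is affine for every geodesic $\gamma$. Fix an interior point $p$ and an arbitrary $v \in T_p\mathcal{M}$; by local solvability of the geodesic equation there is a geodesic with $\gamma(0) = p$, $\dot\gamma(0) = v$, and the identity at $t = 0$ yields $\Hess_f(v,v) = 0$. Because $v$ was arbitrary and $\Hess_f$ at $p$ is a \emph{symmetric} bilinear form --- symmetry holding since $\nabla$ is torsion-free, so $\Hess_f(X,Y) - \Hess_f(Y,X) = [X,Y]f - (\nabla_X Y - \nabla_Y X)f = 0$ --- the polarization identity $\Hess_f(u,v) = \tfrac{1}{2}(\Hess_f(u+v,u+v) - \Hess_f(u,u) - \Hess_f(v,v))$ forces $\Hess_f = 0$ at $p$. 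If $\mathcal{M}$ has boundary, $\Hess_f$ then vanishes there as well by continuity.

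I do not expect a genuine obstacle; the two points deserving explicit mention are the symmetry of $\Hess_f$, which is exactly what lets one pass from vanishing of the quadratic form $v\mapsto\Hess_f(v,v)$ to vanishing of the full tensor, and the parametrization convention --- ``geodesic'' must be read in the affine sense $\nabla_{\dot\gamma}\dot\gamma = 0$ and ``affine in $t$'' in that same parameter, so that the equivalence with $(f\circ\gamma)'' \equiv 0$ is exact. It is also worth remarking that only short geodesic segments are used, so no completeness hypothesis on $\mathcal{M}$ is needed, and if preferred the whole argument can be carried out within a single isometric chart, where it reduces to the elementary fact that a symmetric-matrix-valued function vanishes iff the quadratic form $v^{\top}\Hess_f\,v$ vanishes identically along every straight line.
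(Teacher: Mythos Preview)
Your proof is correct and rests on exactly the same identity the paper sketches, namely $(f\circ\gamma)''(t) = \Hess_f(\dot\gamma,\dot\gamma) + df(\nabla_{\dot\gamma}\dot\gamma)$, with the second term vanishing on geodesics. In fact you go further than the paper: its sketch only draws the forward implication, whereas you supply the converse via local existence of geodesics, symmetry of $\Hess_f$ from torsion-freeness, and polarization, together with the continuity argument at boundary points.
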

\begin{proof}
	(Sketch.) For any Riemannian manifold $\mathcal{M}$ with a torsionfree connection $\nabla$, we can compute
	\begin{equation*}
		(f\circ \gamma)\ddot{\;}(t) = \Hess_f(\gamma(t)) (\dot{\gamma}(t), \dot{\gamma}(t)) + df_{\gamma(t)} \left( \frac{D\dot{\gamma}}{dt}(t) \right)
	\end{equation*}
	so that $(f\circ \gamma)\ddot{\;}$ becomes zero if the Hessian vanishes.
\end{proof}

Thus, $f$ is completely determined by the information $f(p)$ and $d_pf$ at any single point $p\in\mathcal{M}$, showing that $\nu:=\dim \mathcal{N}^{\mathcal{M}} \leq d+1$. However, this information may fail to be consistent, which is measured by the fundamental group $\pi_1(\mathcal{M})$. For example, if $\partial\mathcal{M}=\varnothing$ we have by \cite{morse}:
\begin{thm}
	Let $\mathcal{M}$ be a complete Riemannian manifold with non-positive sectional curvature and fix $p\in \mathcal{M}$. Then every homotopy class in $\pi_1(\mathcal{M},p)$ has a unique geodesic representative $\gamma: [0,1] \rightarrow \mathcal{M}$ with $\gamma(0)=\gamma(1)=p$.
\end{thm}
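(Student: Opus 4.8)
The plan is to reduce the statement to the Cartan--Hadamard theorem on the universal cover. Let $\pi:\widetilde{\mathcal M}\to\mathcal M$ be the universal covering and fix a lift $\tilde p$ of $p$. With the pulled-back metric, $\widetilde{\mathcal M}$ is again complete with non-positive sectional curvature, and it is simply connected; hence by Cartan--Hadamard the exponential map $\exp_{\tilde p}:T_{\tilde p}\widetilde{\mathcal M}\to\widetilde{\mathcal M}$ is a diffeomorphism, and in particular any two points of $\widetilde{\mathcal M}$ are joined by a \emph{unique} constant-speed geodesic defined on $[0,1]$. I would take this as given (citing \cite{morse}); its proof is the only genuinely analytic ingredient and rests on the absence of conjugate points in non-positive curvature — an index-form argument on Jacobi fields — together with Hopf--Rinow for surjectivity.

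Next I would set up the standard dictionary between $\pi_1(\mathcal M,p)$ and the deck group $\Gamma\cong\pi_1(\mathcal M,p)$ acting by isometries on $\widetilde{\mathcal M}$. A loop $\alpha$ based at $p$ lifts uniquely to a path $\tilde\alpha$ starting at $\tilde p$, and $\tilde\alpha(1)=\sigma_{[\alpha]}(\tilde p)$ where $[\alpha]\mapsto\sigma_{[\alpha]}$ is the canonical isomorphism; this endpoint depends only on the \emph{based} homotopy class, since homotopic loops have lifts that are homotopic rel endpoints. For existence, let $\tilde\gamma:[0,1]\to\widetilde{\mathcal M}$ be the unique geodesic from $\tilde p$ to $\sigma_{[\alpha]}(\tilde p)$ and put $\gamma:=\pi\circ\tilde\gamma$. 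Since $\pi$ is a local isometry it carries geodesics to geodesics, so $\gamma$ is a geodesic, and $\gamma(0)=\gamma(1)=p$ because $\pi(\sigma_{[\alpha]}(\tilde p))=\pi(\tilde p)=p$. Finally $\gamma\simeq\alpha$ rel $p$: in the simply connected space $\widetilde{\mathcal M}$ the paths $\tilde\gamma$ and $\tilde\alpha$ share both endpoints, hence are path-homotopic rel endpoints, and pushing this homotopy down by $\pi$ gives a based homotopy from $\gamma$ to $\alpha$.

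For uniqueness, suppose $\gamma'$ is any geodesic loop at $p$ based homotopic to $\alpha$. Lift it to $\widetilde{\gamma'}$ starting at $\tilde p$; the lift of a geodesic under the Riemannian covering $\pi$ is again a geodesic, and by the previous paragraph $\widetilde{\gamma'}(1)=\sigma_{[\alpha]}(\tilde p)$. Thus $\widetilde{\gamma'}$ is a geodesic in $\widetilde{\mathcal M}$ with the same endpoints as $\tilde\gamma$, so by uniqueness of geodesics in the Hadamard manifold $\widetilde{\mathcal M}$ we get $\widetilde{\gamma'}=\tilde\gamma$ and therefore $\gamma'=\gamma$. Here ``geodesic representative'' is understood with the constant-speed parametrization on $[0,1]$; without fixing the parametrization one has uniqueness only up to affine reparametrization, which is harmless for the later applications.

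The main obstacle is bookkeeping rather than analysis: one must be scrupulous that lifts of geodesics are geodesics and lifts of homotopies are homotopies, and that the endpoint $\sigma_{[\alpha]}(\tilde p)$ is an invariant of the \emph{based} class and not merely the free one — it is precisely the constraint $\gamma(0)=\gamma(1)=p$ that makes this work. A more self-contained route avoids covering spaces entirely: minimize the energy $\tfrac12\int_0^1\abs{\dot\gamma}^2\,dt$ over the $H^1$ loops at $p$ in the class $[\alpha]$, obtaining a smooth geodesic minimizer via Arzel\`a--Ascoli and lower semicontinuity (completeness confines a minimizing sequence to a compact set), and deriving uniqueness from strict convexity of the energy along the ``geodesic homotopy'' between two candidates — which is exactly where non-positive curvature enters, through convexity of the distance function on $\widetilde{\mathcal M}$. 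Either way, the curvature hypothesis is used only to force uniqueness of geodesics upstairs.
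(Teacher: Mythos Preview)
Your argument is correct. Note, however, that the paper does not actually supply a proof of this theorem: it is quoted as a known result with a bare citation to Milnor's \emph{Morse Theory} \cite{morse}, and is used only as input to bound $\dim\mathcal{N}^{\mathcal{M}}$. Your Cartan--Hadamard/covering-space argument is precisely the standard proof one finds in that reference (and in most Riemannian geometry texts): pass to the simply connected universal cover, use non-positive curvature to get a unique geodesic between $\tilde p$ and its deck translate, and project. The alternative energy-minimization route you sketch is also standard and is closer in spirit to Milnor's variational treatment, but both arrive at the same place and, as you observe, both spend the curvature hypothesis exclusively on uniqueness upstairs. There is nothing to correct; your write-up is more detailed than what the paper provides.
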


In particular, due to Proposition \ref{affine}, $f\circ \gamma$ must be constant if $\Hess_f=0$ and $\gamma$ is a geodesic representative of an element of $\pi_1(\mathcal{M})$. This forces $d_pf=0$ when restricted to the subspace spanned by $\{\dot{\gamma}(0): [\gamma] \in \pi_1(\mathcal{M})\}$, and therefore $\nu \leq d+1- \dim \pi_1(\mathcal{M})$.

\bigskip
In any case, we may find a finite unisolvent basis $\{\zeta^j\}_{j=1}^{\nu}$ of $\mathcal{N}^{\mathcal{M}}$, analogous to the linear functions $x_j$ on a subspace of $\mathbb{R}^d$. In practice, as in the example above, finding $\zeta^j$ in most cases will simply be a matter of checking which $x_j$ are ruled out in case $\mathcal{M}$ has nontrivial fundamental group. In the next Section, we prove the following central result:

\begin{thm}\label{mainthm}
	If $\mathcal{M}$ is a flat, compact Riemannian manifold with $\partial \mathcal{M} = \varnothing$, the solution of the manifold smoothing spline optimization problem
	\begin{equation*}
		\hat{f}^{SS} = \argmin\sum_{i=1}^N (y_i-f(p_i))^2 + \lambda\cdot \mathcal{H}^{\mathcal{M}}(f)
	\end{equation*}
	formulated in the Sobolev space $W^{2,2}(\mathcal{M})$ exists, is unique, and admits a representation of the form
	\begin{equation*} 
		\hat{f}(p) = \sum_{i=1}^N a_i G(p,p_i) +  \sum_{j=1}^{\nu} b_j \zeta^j(p)
	\end{equation*}
where $G$ is a Green's function for the biharmonic operator $\Delta_{\mathcal{M}}^2$. If more generally $\partial\mathcal{M} \neq \varnothing$, the same result holds when Neumann boundary conditions (corresponding to linearity) are imposed,
\begin{equation*}
	W_{\partial}^{2,2}(\mathcal{M}):= \{f\in W^{2,2}(\mathcal{M}): Nf = 0 \text{ on }\partial\mathcal{M}\} \oplus \mathcal{N}^{\mathcal{M}} \subseteq W^{2,2}(\mathcal{M})
\end{equation*}
where $N$ is the outwards-oriented unit normal vector field on $\partial\mathcal{M}$.
\end{thm}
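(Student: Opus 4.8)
The plan is to recast $\mathcal{L}^{SS}(\,\cdot\,;\lambda)$ as a penalized least-squares functional on a reproducing kernel Hilbert space (RKHS) and then invoke the generalized representer theorem, in the spirit of Kimeldorf--Wahba and the RKHS treatment of thin-plate splines.

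\emph{Step 1 (splitting off the penalty kernel).} By Theorem \ref{nullspace} the quadratic form $\mathcal{H}^{\mathcal{M}}$ has the $\nu$-dimensional null space $\mathcal{N}^{\mathcal{M}}=\operatorname{span}\{\zeta^1,\dots,\zeta^{\nu}\}$, so I would write $W^{2,2}(\mathcal{M})=\mathcal{N}^{\mathcal{M}}\oplus\mathcal{X}_1$ for a closed complement $\mathcal{X}_1$ (for instance the $W^{2,2}$-orthogonal complement). On $\mathcal{X}_1$ the polarized form $\langle f,g\rangle_1:=\int_{\mathcal{M}}\langle\Hess_f^{\tan},\Hess_g^{\tan}\rangle_F\, d\mu$ is an inner product; the decisive point is that it is \emph{complete}, equivalently that $\norm{f}_{W^{2,2}}\le C\,\mathcal{H}^{\mathcal{M}}(f)^{1/2}$ for all $f\in\mathcal{X}_1$ --- a Poincar\'e--Friedrichs-type estimate. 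I would establish this by the standard compactness argument: if it failed there would exist $f_n\in\mathcal{X}_1$ with $\norm{f_n}_{W^{2,2}}=1$ and $\mathcal{H}^{\mathcal{M}}(f_n)\to 0$; Rellich's theorem on the compact manifold $\mathcal{M}$ produces a $W^{1,2}$-convergent subsequence, the vanishing of the Hessian energy upgrades this to a $W^{2,2}$-Cauchy sequence, and its limit $f$ satisfies $\mathcal{H}^{\mathcal{M}}(f)=0$, hence $f\in\mathcal{N}^{\mathcal{M}}\cap\mathcal{X}_1=\{0\}$, contradicting $\norm{f}_{W^{2,2}}=1$. I expect this to be the main obstacle, since it is precisely where the exact characterization of $\ker\mathcal{H}^{\mathcal{M}}$ from Theorem \ref{nullspace} and the compactness of $\mathcal{M}$ must be combined.

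\emph{Step 2 (RKHS structure and the representer theorem).} Given completeness, $(\mathcal{X}_1,\langle\cdot,\cdot\rangle_1)$ is a Hilbert space, and since $W^{2,2}(\mathcal{M})\hookrightarrow C(\mathcal{M})$ continuously (which requires $2>d/2$, i.e.\ $d\le 3$ --- in higher dimensions the penalty order would have to be raised, paralleling the singularity of the TPS Green's function for $d\ge 4$) the point evaluations $f\mapsto f(p)$ are bounded on $\mathcal{X}_1$. Hence $\mathcal{X}_1$ is an RKHS with some reproducing kernel $K:\mathcal{M}\times\mathcal{M}\to\mathbb{R}$, and minimizing $\mathcal{L}^{SS}$ over $\mathcal{N}^{\mathcal{M}}\oplus\mathcal{X}_1$ is exactly penalized regression with seminorm penalty $\lambda\norm{\,\cdot\,}_1^2$. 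Writing $f=g+h$ with $g\in\mathcal{N}^{\mathcal{M}}$, $h\in\mathcal{X}_1$, and splitting $h=h_\parallel+h_\perp$ along $V:=\operatorname{span}\{K(\cdot,p_1),\dots,K(\cdot,p_N)\}$, the reproducing property gives $h_\perp(p_i)=\langle h_\perp,K(\cdot,p_i)\rangle_1=0$, so $h$ and $h_\parallel$ fit the data identically while $\norm{h}_1^2=\norm{h_\parallel}_1^2+\norm{h_\perp}_1^2\ge\norm{h_\parallel}_1^2$; hence any minimizer lies in the finite-dimensional space $V\oplus\mathcal{N}^{\mathcal{M}}$ spanned by the $K(\cdot,p_i)$ and the $\zeta^j$. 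On that space $\mathcal{L}^{SS}$ is a quadratic form which is coercive --- the penalty controls $\norm{h}_1$ and hence $\norm{h}_{W^{2,2}}$, while the unisolvency of $\{\zeta^j\}$ over $\{p_i\}$ makes $[\zeta^j(p_i)]$ full rank and controls $g$ --- and, for $\lambda>0$, strictly convex, so a minimizer exists and is unique.

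\emph{Step 3 (identifying $K$ with a Green's function).} Polarizing the penalty and integrating by parts twice in flat local coordinates --- exactly the integration by parts underlying (\ref{biharmonic}), now on a closed manifold so that all boundary terms vanish --- yields $\langle f,g\rangle_1=\int_{\mathcal{M}}f\,(\Delta_{\mathcal{M}}^2 g)\,d\mu$ when $d\mu$ is the normalized Riemannian volume (for a general positive density one gets the self-adjoint fourth-order operator attached to the form, with the same leading symbol). The reproducing identity $\langle K(\cdot,p),g\rangle_1=g(p)$ for all $g\in\mathcal{X}_1$ therefore reads $\int_{\mathcal{M}}(\Delta_{\mathcal{M}}^2 K(\cdot,p))\,g\,d\mu=g(p)$, i.e.\ $\Delta_{\mathcal{M}}^2 K(\cdot,p)-\delta_p$ annihilates $\mathcal{X}_1$; thus $G:=K$ is a Green's function for the biharmonic operator modulo $\mathcal{N}^{\mathcal{M}}$ and orthogonal to it, and the representer form becomes $\hat f(p)=\sum_i a_i G(p,p_i)+\sum_j b_j\zeta^j(p)$. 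For $\partial\mathcal{M}\ne\varnothing$ the same argument runs in $W_{\partial}^{2,2}(\mathcal{M})$: the Neumann conditions $Nf=0$ are exactly what makes the boundary terms produced by the integrations by parts in Steps 1 and 3 drop out, while $\mathcal{N}^{\mathcal{M}}\subseteq W_{\partial}^{2,2}(\mathcal{M})$ by construction, so Theorem \ref{nullspace}, the Poincar\'e estimate, Rellich's theorem and the Sobolev embedding all transfer verbatim.
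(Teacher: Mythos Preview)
Your proposal is correct and follows essentially the same route as the paper: split off the finite-dimensional null space $\mathcal{N}^{\mathcal{M}}$, show the Hessian bilinear form makes the complement an RKHS, invoke the representer theorem, and identify the reproducing kernel with a biharmonic Green's function via two integrations by parts. Where you differ is in emphasis rather than strategy: you are more careful than the paper on completeness (the Poincar\'e--Friedrichs/Rellich argument) and on the Sobolev embedding restriction $d\le 3$, neither of which the paper addresses explicitly; conversely, the paper is more explicit than your Step 3 about the boundary case, where merely imposing $Nf=0$ does not kill all boundary terms---the paper handles the residual term by stratifying by the boundary average $\langle f\rangle_{\partial\mathcal{M}}$, reducing to the zero-average slice, and choosing mixed boundary conditions on the Green's function ($\gamma G(\cdot,q)=0$ together with $N\Delta_{\mathcal{M}}G(\cdot,q)=1/\operatorname{vol}(\partial\mathcal{M})$), a detail your sketch would need to supply.
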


We are moving to the setting of Sobolev spaces in order to utilize the machinery of Hilbert spaces; derivatives are to be understood in a suitable weak sense. The necessary results from the theory of partial differential equations on manifolds are presented here. For a detailed exposition, see Chapter 5 of \cite{taylor}. Let $\mathcal{M}$ be a compact oriented Riemannian manifold, possibly with boundary. Let $\iota: \partial\mathcal{M} \rightarrow \mathcal{M}$ be the canonical embedding and $d\omega$ the induced volume element on $\partial\mathcal{M}$. Recall the Sobolev space $H_0^s(\mathcal{M})$ is defined as the closure of $C_c^{\infty}(\mathcal{M})$ in $H^s(\mathcal{M}) = W^{s,2}(\mathcal{M})$ with respect to the Sobolev norm.
\begin{prop}
	The Laplace-Beltrami operator $\Delta_{\mathcal{M}}$ extends to a well-defined elliptic partial differential operator of order 2 on $H_0^1(\mathcal{M})$, yielding an isomorphism
	\begin{equation*}
		\Delta_{\mathcal{M}}: H_0^1(\mathcal{M}) \rightarrow H^{-1}(\mathcal{M})
	\end{equation*}
	The inverse operator $T_{\mathcal{M}}$ restricted to $L^2(\mathcal{M})$ is compact and negative self-adjoint, by Rellich's theorem.
\end{prop}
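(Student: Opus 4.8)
The plan is to realize $\Delta_{\mathcal{M}}$ through its Dirichlet form and apply the Lax--Milgram theorem. First I would define, for $u,v\in C_c^{\infty}(\mathcal{M})$, the bilinear form $B(u,v)=\int_{\mathcal{M}}\langle\nabla u,\nabla v\rangle_g\,d\mu$. Integration by parts (Green's identity, the boundary term vanishing because $v$ is compactly supported in the interior) gives $B(u,v)=-\int_{\mathcal{M}}(\Delta_{\mathcal{M}}u)\,v\,d\mu$, so $B$ is exactly the weak form of $-\Delta_{\mathcal{M}}$; and since $\abs{B(u,v)}\le\norm{\nabla u}_{L^2}\norm{\nabla v}_{L^2}\le\norm{u}_{H^1}\norm{v}_{H^1}$, the form extends continuously to all of $H_0^1(\mathcal{M})$, defining $\Delta_{\mathcal{M}}$ as a bounded operator $H_0^1(\mathcal{M})\to H^{-1}(\mathcal{M}):=(H_0^1(\mathcal{M}))^*$. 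Ellipticity of order $2$ is a local matter: in coordinates $\Delta_{\mathcal{M}}=\tfrac{1}{\sqrt{\abs g}}\partial_i(\sqrt{\abs g}\,g^{ij}\partial_j)$ has principal symbol $-g^{ij}\xi_i\xi_j$, which is definite because $g$ is Riemannian.

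Next I would establish coercivity of $B$ on $H_0^1(\mathcal{M})$, namely $B(u,u)=\norm{\nabla u}_{L^2}^2\ge c\,\norm{u}_{H^1}^2$. This is the one genuinely analytic input and relies on the Poincar\'e inequality $\norm{u}_{L^2}\le C\norm{\nabla u}_{L^2}$ on $H_0^1(\mathcal{M})$, valid because $\mathcal{M}$ is compact (with $\partial\mathcal{M}\neq\varnothing$; in the boundaryless case one instead works on the orthogonal complement of the constants, which constitute $\ker\Delta_{\mathcal{M}}$), as in Chapter 5 of \cite{taylor}. Lax--Milgram then yields, for each $F\in H^{-1}(\mathcal{M})$, a unique $u=T_{\mathcal{M}}F\in H_0^1(\mathcal{M})$ with $B(u,v)=-\langle F,v\rangle$ for all $v$, i.e. $\Delta_{\mathcal{M}}u=F$ weakly, together with the estimate $\norm{u}_{H^1}\le c^{-1}\norm{F}_{H^{-1}}$. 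Hence $\Delta_{\mathcal{M}}$ is a bounded bijection with bounded inverse, i.e. a Banach-space isomorphism.

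It remains to treat $T_{\mathcal{M}}$ on $L^2(\mathcal{M})$. The inclusion $L^2(\mathcal{M})\hookrightarrow H^{-1}(\mathcal{M})$ is bounded, so $T_{\mathcal{M}}$ maps $L^2$ continuously into $H_0^1(\mathcal{M})$; composing with the Rellich--Kondrachov compact embedding $H_0^1(\mathcal{M})\hookrightarrow L^2(\mathcal{M})$ (valid since $\mathcal{M}$ is compact) shows $T_{\mathcal{M}}:L^2\to L^2$ is compact. For $f,h\in L^2(\mathcal{M})$, writing $u=T_{\mathcal{M}}f$, $w=T_{\mathcal{M}}h$ and inserting $v=w$ and $v=u$ into the two defining identities, $\langle T_{\mathcal{M}}f,h\rangle_{L^2}=\langle u,h\rangle_{L^2}=-B(w,u)=-B(u,w)=\langle f,w\rangle_{L^2}=\langle f,T_{\mathcal{M}}h\rangle_{L^2}$ by symmetry of $B$, so $T_{\mathcal{M}}$ is self-adjoint. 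Finally $\langle T_{\mathcal{M}}f,f\rangle_{L^2}=-B(u,u)=-\norm{\nabla u}_{L^2}^2\le0$, with equality only if $u=0$ (by coercivity), i.e. $f=0$, so $T_{\mathcal{M}}$ is negative definite. The main obstacle is the Poincar\'e/coercivity step and the attendant care about the boundaryless case; everything else is a routine assembly of Lax--Milgram, the Rellich embedding, and the local symbol computation, all available in the cited reference.
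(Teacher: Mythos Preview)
The paper does not actually supply a proof of this proposition: it is stated as a background result with a pointer to Chapter~5 of \cite{taylor}, and the text moves on immediately to the trace theorem. Your proposal is the standard argument one would find in that reference---realize $-\Delta_{\mathcal{M}}$ via its Dirichlet form, invoke Poincar\'e for coercivity on $H_0^1$, apply Lax--Milgram for the isomorphism, and then Rellich--Kondrachov for compactness of the inverse on $L^2$---and it is correct. There is nothing to compare against in the paper itself; your write-up simply fills in what the author left to the citation.
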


Before we proceed, the issue of boundary conditions in the current setting must also be discussed. Since elements of Sobolev spaces are \textit{equivalence classes} of functions with respect to the volume measure on $\mathcal{M}$, and $\partial\mathcal{M}$ always has measure zero, familiar expressions such as $f|_{\partial\mathcal{M}}$ or $Nf$ are a priori not well-defined. However, such constructions are still possible due to the
\begin{thm}[Trace Theorem] \label{trace}
	Let $\mathcal{M}$ be a Riemannian manifold with smooth boundary $\partial\mathcal{M}$. Then for $1\leq p<\infty$, there exists a unique bounded linear operator $\gamma: W^{1,p}(\mathcal{M}) \rightarrow L^p(\partial\mathcal{M})$ extending the classical trace,
	\begin{equation*}
		\gamma u = u|_{\partial\mathcal{M}} \quad \text{for } u\in W^{1,p}(\mathcal{M}) \cap C(\partial\mathcal{M})
	\end{equation*}
	
\end{thm}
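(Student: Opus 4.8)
The plan is to reduce the global statement on $\mathcal{M}$ to a local estimate in a flat half-space model via a partition of unity, prove the trace inequality there for smooth functions by the fundamental theorem of calculus, and then obtain $\gamma$ by continuous extension from a dense subspace; this is the classical argument (see Chapter 5 of \cite{taylor}), adapted to the manifold setting. First I would fix a finite atlas adapted to the boundary: interior charts away from $\partial\mathcal{M}$, and near $\partial\mathcal{M}$ boundary charts $\varphi_\alpha : U_\alpha \to V_\alpha \subseteq \mathbb{R}^d_+ = \{x\in\mathbb{R}^d : x_d \ge 0\}$ carrying $U_\alpha \cap \partial\mathcal{M}$ diffeomorphically onto $V_\alpha \cap \{x_d = 0\}$, together with a smooth partition of unity $\{\rho_\alpha\}$ subordinate to this cover. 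On each relatively compact chart the Riemannian metric is comparable to the Euclidean one, so the norms $\norm{u}_{W^{1,p}(U_\alpha)}$ and $\norm{(\varphi_\alpha)_*u}_{W^{1,p}(V_\alpha)}$ are equivalent up to constants, and likewise the surface measure $d\omega$ on $\partial\mathcal{M}$ is comparable to Lebesgue measure on $\{x_d = 0\}$; thus it suffices to treat the model half-cube $V^+ = Q' \times [0,1) \subset \mathbb{R}^d_+$ with flat boundary $Q' \times \{0\}$.

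Second, for $u \in C^1(\overline{V^+})$ supported in $Q' \times [0,\tfrac12]$ I would write, for each $x'\in Q'$,
\[
|u(x',0)|^p = -\int_0^1 \partial_t\big(|u(x',t)|^p\big)\,dt = -\,p\int_0^1 |u(x',t)|^{p-1}\,\sgn(u(x',t))\,\partial_t u(x',t)\,dt ,
\]
so $|u(x',0)|^p \le p\int_0^1 |u|^{p-1}|\partial_d u|\,dt$; integrating over $x'\in Q'$ and applying Young's inequality (with exponents $p,\,p/(p-1)$, and the trivial simplification when $p=1$) gives the model trace estimate
\[
\int_{Q'} |u(x',0)|^p\,dx' \le C\int_{V^+}\big(|u|^p + |\nabla u|^p\big)\,dx = C\,\norm{u}_{W^{1,p}(V^+)}^p .
\]
Pulling this back through the boundary charts and summing against the partition of unity — using $|\nabla(\rho_\alpha u)| \le |\nabla\rho_\alpha|\,|u| + \rho_\alpha|\nabla u|$ and the fact that finitely many charts cover the compact $\mathcal{M}$ so the constants are uniform — yields $\norm{u|_{\partial\mathcal{M}}}_{L^p(\partial\mathcal{M})} \le C\,\norm{u}_{W^{1,p}(\mathcal{M})}$ for every $u \in C^1(\overline{\mathcal{M}})$.

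Third, I would invoke density of $C^\infty(\overline{\mathcal{M}})$ (equivalently $C^1(\overline{\mathcal{M}})$) in $W^{1,p}(\mathcal{M})$, proved by a Meyers--Serrin / Friedrichs construction that, in each boundary chart, translates the function inward and mollifies so as to stay within $W^{1,p}$ control; the smoothness of $\partial\mathcal{M}$ is exactly what makes this inward translation legitimate. The restriction map $u \mapsto u|_{\partial\mathcal{M}}$ is then a bounded linear operator on this dense subspace by the estimate above, so it extends uniquely to a bounded linear $\gamma : W^{1,p}(\mathcal{M}) \to L^p(\partial\mathcal{M})$ (bounded-operator extension from a dense subspace); uniqueness of $\gamma$ is immediate since any two bounded extensions agree on a dense set. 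Finally, if $u \in W^{1,p}(\mathcal{M})$ is continuous up to $\partial\mathcal{M}$, approximating $u$ simultaneously in $W^{1,p}$ and locally uniformly near $\partial\mathcal{M}$ shows $\gamma u = u|_{\partial\mathcal{M}}$, so $\gamma$ genuinely extends the classical trace.

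The main obstacle is the density step near the boundary: one must establish that functions smooth up to $\partial\mathcal{M}$ are dense in $W^{1,p}(\mathcal{M})$, which genuinely uses the boundary-regularity hypothesis (smoothness, or at least a Lipschitz condition) to perform the translate-and-mollify argument without losing Sobolev bounds. The remaining ingredients — the one-dimensional trace inequality, the partition-of-unity bookkeeping, and the comparability of Riemannian and Euclidean quantities on charts — are routine, and compactness of $\mathcal{M}$ (inherited from the standing hypotheses of this section) ensures the atlas is finite so that all constants are uniform.
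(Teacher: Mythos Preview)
The paper does not actually prove this theorem: it merely states the Trace Theorem as a standard result from the PDE literature, referring the reader to Chapter~5 of \cite{taylor} for details, so there is no in-paper argument to compare against. Your proposal is precisely the classical proof one finds in that reference---localize via a boundary-adapted atlas and partition of unity, establish the half-space trace inequality by the fundamental theorem of calculus and Young's inequality, and extend by density of $C^\infty(\overline{\mathcal{M}})$ in $W^{1,p}(\mathcal{M})$---and it is correct as outlined.
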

The space $H_0^1(\mathcal{M})$ can then be taken as the subspace of $H^1(\mathcal{M})$ with vanishing trace, and the Dirac delta distribution $\delta_q$ for $q\in\mathcal{M}$ can be identified with the bounded linear functional $f\mapsto f(q)$ on $H_0^1(\mathcal{M})$, i.e. an element of the dual space $H^{-1}(\mathcal{M})$.

\begin{thm}[Poisson's equation with Dirichlet boundary conditions]
	For $f\in H^{-1}(\mathcal{M})$ and $g\in C^{\infty}(\partial\mathcal{M})$, the boundary problem
	\begin{equation*}
		\Delta_{\mathcal{M}} u=f \text{ on } \mathcal{M},\quad \gamma u=g \text{ on } \partial\mathcal{M}
	\end{equation*}
has a unique solution in $H^1(\mathcal{M})$.
\end{thm}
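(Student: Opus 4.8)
The plan is to reduce the inhomogeneous Dirichlet problem to the homogeneous one, using the isomorphism $\Delta_{\mathcal{M}}\colon H^1_0(\mathcal{M})\to H^{-1}(\mathcal{M})$ established above together with a lifting of the boundary data. I will assume $\partial\mathcal{M}\neq\varnothing$, which is the situation the statement has in mind; note that the isomorphism property already encodes the Poincar\'e inequality available in that case (for a closed manifold the kernel of $\Delta_{\mathcal{M}}$ would be the constants and one would have to pass to the Fredholm alternative instead).

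First I would produce an auxiliary $w\in H^1(\mathcal{M})$ with $\gamma w=g$. Since $g$ is smooth this is elementary: pick a collar diffeomorphism $\Phi\colon\partial\mathcal{M}\times[0,\varepsilon)\to U\subseteq\mathcal{M}$, a cutoff $\chi\in C_c^\infty([0,\varepsilon))$ with $\chi\equiv 1$ near $0$, and set $w:=\chi(t)\,g(\theta)$ in the coordinates $(\theta,t)$ on $U$, extended by zero elsewhere; then $w\in C^\infty(\mathcal{M})\subseteq H^1(\mathcal{M})$, and since $w$ is continuous up to $\partial\mathcal{M}$ with $w|_{\partial\mathcal{M}}=g$, the uniqueness clause of the Trace Theorem gives $\gamma w=g$. (Alternatively one could quote surjectivity of $\gamma$ onto $H^{1/2}(\partial\mathcal{M})$, but the hands-on construction is cleaner here.)

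Next, $u\in H^1(\mathcal{M})$ solves the problem if and only if $v:=u-w$ lies in $H^1_0(\mathcal{M})$ — since $\gamma v=g-g=0$ — and satisfies $\Delta_{\mathcal{M}}v=f-\Delta_{\mathcal{M}}w$ on $\mathcal{M}$. As $w$ is smooth and $\mathcal{M}$ compact, $\Delta_{\mathcal{M}}w$ is the classical Laplacian of $w$, which lies in $C^\infty(\mathcal{M})\subseteq L^2(\mathcal{M})\subseteq H^{-1}(\mathcal{M})$, so $\tilde f:=f-\Delta_{\mathcal{M}}w\in H^{-1}(\mathcal{M})$. By the established isomorphism there is a unique $v\in H^1_0(\mathcal{M})$ with $\Delta_{\mathcal{M}}v=\tilde f$, and then $u:=v+w\in H^1(\mathcal{M})$ solves the Dirichlet problem. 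For uniqueness, two solutions $u_1,u_2$ differ by an element of $H^1(\mathcal{M})$ of vanishing trace, hence of $H^1_0(\mathcal{M})$, which is annihilated by $\Delta_{\mathcal{M}}$; injectivity of $\Delta_{\mathcal{M}}$ on $H^1_0(\mathcal{M})$ forces $u_1=u_2$ (in particular $u$ is independent of the chosen lift $w$).

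There is no serious obstacle here: the only substantive ingredient is the existence of a trace lift in the first step, and that is routine given a collar neighborhood and the Trace Theorem, while the rest is bookkeeping around the isomorphism. If one preferred to avoid that isomorphism, the alternative would be a weak formulation on the affine space $w+H^1_0(\mathcal{M})$ combined with Lax--Milgram applied to the Dirichlet form $a(u,\varphi)=\int_{\mathcal{M}}\langle\nabla u,\nabla\varphi\rangle\,d\mathrm{vol}$, whose coercivity on $H^1_0(\mathcal{M})$ is again the Poincar\'e inequality; but with the isomorphism in hand the reduction above is the shortest route.
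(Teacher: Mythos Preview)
Your argument is correct and follows essentially the same route as the paper: lift the boundary data to a smooth $\tilde g\in C^\infty(\mathcal{M})$, reduce to the homogeneous Dirichlet problem for $v=u-\tilde g$, and solve via the isomorphism $\Delta_{\mathcal{M}}\colon H_0^1(\mathcal{M})\to H^{-1}(\mathcal{M})$ (equivalently, apply $T_{\mathcal{M}}$). You supply more detail on the collar construction and the uniqueness step than the paper does, but the strategy is identical.
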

\begin{proof}
	Construct $\tilde{g} \in C^{\infty}(\mathcal{M})$ so that $\tilde{g}|_{\partial\mathcal{M}} = g$. Then the above problem is equivalent to
	\begin{equation*}
		\Delta_{\mathcal{M}} v=f- \Delta_{\mathcal{M}}\tilde{g},\quad \gamma v=0
	\end{equation*}
for $v=u-\tilde{g}$. The solution is then found as $v = T_{\mathcal{M}}(f- \Delta_{\mathcal{M}}\tilde{g}) \in H_0^1(\mathcal{M})$, and uniqueness is clear.
\end{proof}

Since $W^{2,2}(\mathcal{M}) \subset W^{1,2}(\mathcal{M})$, it makes sense to talk about elements of $W^{2,2}(\mathcal{M})$ with vanishing trace. Also, for $f\in W^{2,2}(\mathcal{M})$, we have $\nabla f \in W^{1,2}(\mathcal{M})$ and
\begin{equation*}
	\lVert Nf \rVert_{L^2(\partial\mathcal{M})} \leq \lVert \gamma \, \nabla f \rVert_{L^2(\partial\mathcal{M})} \leq C\lVert \nabla f \rVert_{W^{1,2}(\mathcal{M})} <\infty
\end{equation*}
so that Neumann boundary conditions $Nf = g\in L^2(\partial\mathcal{M})$ are also well-defined. More generally, the following result can be proved for Neumann conditions (but we only require the case where $g$ is constant):
\begin{thm}[Poisson's equation with Neumann boundary conditions]
	For $f\in H^k(\mathcal{M})$ and $g\in H^{k+1/2}(\partial\mathcal{M})$, the boundary problem
	\begin{equation*}
		\Delta_{\mathcal{M}} u=f \text{ on } \mathcal{M},\quad N u=g \text{ on } \partial\mathcal{M}
	\end{equation*}
	has a unique solution in $H^{k+2}(\mathcal{M})$ if and only if
	\begin{equation*}
		\int_{\mathcal{M}} f d\mu = \int_{\partial\mathcal{M}} g d\omega
	\end{equation*}
	holds.
\end{thm}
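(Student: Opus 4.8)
The plan is to separate the claim into the easy \emph{only if} direction (necessity of the compatibility condition) and the substantive \emph{if} direction (existence together with regularity), handling the latter by the variational method. Throughout I adopt the sign convention, consistent with the proposition above, that $\Delta_{\mathcal{M}}=\operatorname{div}\nabla$ is negative self-adjoint. For necessity, suppose $u\in H^{k+2}(\mathcal{M})\subseteq H^1(\mathcal{M})$ solves the boundary problem. Since $\Delta_{\mathcal{M}}u=\operatorname{div}(\nabla u)$, the divergence theorem gives $\int_{\mathcal{M}} f\,d\mu=\int_{\mathcal{M}}\operatorname{div}(\nabla u)\,d\mu=\int_{\partial\mathcal{M}}\langle\nabla u,N\rangle\,d\omega=\int_{\partial\mathcal{M}} g\,d\omega$, the last equality using the Trace Theorem and $Nu=g$; this is exactly the asserted compatibility identity.

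For existence, I would work with the weak formulation on $H^1(\mathcal{M})$. Set $a(u,v):=\int_{\mathcal{M}}\langle\nabla u,\nabla v\rangle\,d\mu$ and $\ell(v):=-\int_{\mathcal{M}} fv\,d\mu+\int_{\partial\mathcal{M}} g\,\gamma v\,d\omega$; Green's first identity shows that a classical solution obeys $a(u,v)=\ell(v)$ for all $v\in H^1(\mathcal{M})$. The difficulty is that $a$ annihilates the constant functions, so it fails to be coercive on $H^1(\mathcal{M})$. I would therefore pass to $\dot H^1(\mathcal{M}):=\{v\in H^1(\mathcal{M}):\int_{\mathcal{M}} v\,d\mu=0\}\cong H^1(\mathcal{M})/\mathbb{R}$, on which the Poincar\'e--Wirtinger inequality makes $a$ bounded and coercive. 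Here the hypothesis plays its essential role: $\int_{\mathcal{M}} f\,d\mu=\int_{\partial\mathcal{M}} g\,d\omega$ is precisely the statement that $\ell$ vanishes on constants, hence descends to a bounded linear functional on the quotient. Lax--Milgram then yields a unique $u\in\dot H^1(\mathcal{M})$ with $a(u,v)=\ell(v)$ for all $v\in\dot H^1(\mathcal{M})$, and since both sides kill constants the identity holds for every $v\in H^1(\mathcal{M})$. Testing against $v\in C_c^\infty(\mathcal{M})$ gives $\Delta_{\mathcal{M}}u=f$ distributionally, and integrating by parts against a general $v$ recovers $Nu=g$ in $L^2(\partial\mathcal{M})$.

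It remains to upgrade the regularity of $u$ and to address uniqueness. Elliptic regularity for the Neumann problem (localizing near $\partial\mathcal{M}$, flattening the boundary, and running the tangential difference-quotient estimates, or else invoking the boundary regularity theory of Chapter 5 of \cite{taylor}) promotes the weak solution to $u\in H^{k+2}(\mathcal{M})$ once $f\in H^k(\mathcal{M})$ and $g\in H^{k+1/2}(\partial\mathcal{M})$. For uniqueness, any two solutions differ by some $w$ with $\Delta_{\mathcal{M}}w=0$ and $Nw=0$, whence $a(w,w)=-\int_{\mathcal{M}} w\,\Delta_{\mathcal{M}}w\,d\mu+\int_{\partial\mathcal{M}}(Nw)\,\gamma w\,d\omega=0$ by Green's identity, so $\nabla w=0$ and $w$ is constant on the connected manifold $\mathcal{M}$; thus the solution is unique up to an additive constant, which the normalization $\int_{\mathcal{M}} u\,d\mu=0$ removes. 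I expect the boundary elliptic regularity step to be the main obstacle, as Neumann boundary estimates are more delicate than their Dirichlet counterparts; a secondary point requiring care is the consistent bookkeeping of signs for $\Delta_{\mathcal{M}}$ across the divergence-theorem computation and the variational formulation.
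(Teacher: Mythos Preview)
The paper does not actually prove this theorem: it is quoted as a standard result (the surrounding discussion points to Chapter~5 of \cite{taylor}), with only the remark ``the above constraint is a consequence of the divergence theorem for manifolds'' appended afterward. Your proposal is therefore not competing with a proof in the paper but rather supplying one, and the variational route you outline (divergence theorem for necessity; Lax--Milgram on $\dot H^1(\mathcal{M})$ using Poincar\'e--Wirtinger for a weak solution; elliptic boundary regularity to reach $H^{k+2}$) is the standard and correct argument.

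One point worth flagging: your uniqueness discussion is more careful than the paper's statement. As you observe, the Neumann problem determines $u$ only up to an additive constant, so ``unique solution'' in the theorem should be read as unique modulo constants (or after a normalization such as $\int_{\mathcal{M}} u\,d\mu=0$). The paper is silent on this, and indeed only ever uses the theorem with constant $g$, where the ambiguity is harmless for its purposes. Your identification of the boundary elliptic regularity step as the main technical burden is accurate; for a self-contained write-up you would either cite the Neumann regularity theorem from \cite{taylor} directly or carry out the tangential difference-quotient argument after straightening the boundary.
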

The above constraint is a consequence of the divergence theorem for manifolds.

\bigskip
In the rest of this Section, we mirror the informal approach in Section 2 to motivate the use of \textit{biharmonic} Green's functions. As with TPS, we write the penalty (\ref{ssloss2}) in integral form:
\begin{equation*}
	\mathcal{L}^{SS}(f;\lambda) = \int_{\mathcal{M}} \left( \sum_{i=1}^N (y_i - f(p))^2 \delta_{p_i}(p) + \lambda \sum_{i=1}^N \sum_{j=1}^N f_{x_ix_j}^2(p) \right) d\mu(p)
\end{equation*}
where $\delta_q$ is the Dirac delta distribution on $\mathcal{M}$ such that the formal integral $\int_{\mathcal{M}} g(p) \delta_q(p) d\mu(p) = g(q)$ for all suitable functions $g$. The calculus of variations also works in the setting of Riemannian geometry, where the equations are to be understood as holding on every local orthonormal coordinate patch $\{x^j\}$. Applying the Euler-Lagrange equation (\ref{elmult}) gives
\begin{equation*}
	\sum_{i=1}^N (f(p)-y_i) \delta_{p_i}(p) + \lambda\Delta_{\mathcal{M}}^2 f(p) = 0
\end{equation*}

\bigskip
Now, the solution $\hat{f}^{SS}$ may be expressed in a finite-dimensional subspace as a linear combination of Green's functions $G(p,p_i)$ and an orthogonal term in $\ker \mathcal{H}^{\mathcal{M}}$, transforming the problem into a simple linear equation of the coefficients.
\begin{equation*}
	f(p) = \sum_{i=1}^N a_i G(p,p_i) +  \sum_{j=1}^{\nu} b_j \zeta^j(p)
\end{equation*}

Similarly to TPS, we solve the system $f(p_i) - y_i + \lambda a_i = 0$. Letting $\mathbf{G}_{ij} = G(p_i, p_j)$, and $\zeta(\mathbf{P})$ the $\nu \times N$ design matrix with $\mathbf{Z}_{ij} = \zeta^i(p_j)$, we have:
\begin{equation*}
	\mathbf{G}\mathbf{a} + \zeta(\mathbf{P})^{\top} \mathbf{b} - \mathbf{y} + \lambda \mathbf{a} = 0
\end{equation*}
with the orthogonality condition $\zeta(\mathbf{P}) \mathbf{a} = 0$. The system is solved by
\begin{equation*}
	\hat{\mathbf{b}} = \left( \zeta(\mathbf{P}) (\mathbf{G} + \lambda \mathbf{I}_N)^{-1} \zeta(\mathbf{P})^{\top} \right)^{-1} \zeta(\mathbf{P}) (\mathbf{G} + \lambda \mathbf{I}_N)^{-1} \mathbf{y}, \quad \hat{\mathbf{a}} = (\mathbf{G} + \lambda \mathbf{I}_N)^{-1} (\mathbf{y} - \zeta(\mathbf{P})^{\top} \hat{\mathbf{b}})
\end{equation*}

\section{RKHS on Manifolds}

\subsection{Theory of RKHS}
In order to prove Theorem \ref{mainthm}, we first present the theory of reproducing kernel Hilbert spaces, which provide powerful tools to resolve optimization problems of certain classes of functions.

A Hilbert space $\mathscr{H}$ of real-valued functions on a space $\mathscr{E}$ with inner product $\langle \cdot,\cdot \rangle$ is called a \textit{reproducing kernel Hilbert space (RKHS)} if all evaluation functionals $\phi_t : f \mapsto f(t)$ are continuous. In any normed vector space of functions $\mathscr{H}$, it is easy to see that:

\begin{prop}
	The following conditions are equivalent:
	
	\begin{enumerate}
		\item The evaluation functionals $\phi_t$ are continuous for all $t\in\mathscr{E}$.
		
		\item For $f, \{f_n\}_{n=1}^{\infty} \in \mathscr{H}$ satisfying $\norm{f_n-f} \rightarrow 0$, then $f_n(t) \rightarrow f(t)$ for all $t\in\mathscr{E}$.
		
		\item For every $t\in\mathscr{E}$ there exists $C_t$ such that $|f(t)| \leq C_t \norm{f}$ for all $f\in\mathscr{H}$.
	\end{enumerate}
\end{prop}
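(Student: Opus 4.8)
The plan is to observe that each evaluation map $\phi_t$ is a \emph{linear} functional on the normed space $\mathscr{H}$, and then to invoke the standard circle of equivalences between continuity, boundedness, and sequential continuity for linear functionals. Linearity is immediate: for $f,g\in\mathscr{H}$ and scalars $a,b$ we have $\phi_t(af+bg)=(af+bg)(t)=af(t)+bg(t)=a\phi_t(f)+b\phi_t(g)$, where the pointwise operations on functions are exactly the vector space operations of $\mathscr{H}$.

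First I would show (1) $\Leftrightarrow$ (3), which is just the characterization of bounded linear functionals. If $\phi_t$ is continuous it is continuous at $0$, so there is $\delta>0$ with $|\phi_t(g)|\leq 1$ whenever $\norm{g}\leq\delta$; homogeneity then gives $|\phi_t(f)|\leq \delta^{-1}\norm{f}$ for all $f\in\mathscr{H}$, i.e. (3) with $C_t=\delta^{-1}$. Conversely, if (3) holds then $|\phi_t(f)-\phi_t(f')|=|\phi_t(f-f')|\leq C_t\norm{f-f'}$, so $\phi_t$ is Lipschitz, hence continuous.

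Next, (1) $\Leftrightarrow$ (2). Since $\mathscr{H}$ is a normed (hence metric, hence first countable) space, continuity of $\phi_t$ is equivalent to sequential continuity, and sequential continuity of the linear map $\phi_t$ at an arbitrary point reduces by translation to the statement ``$\norm{f_n-f}\to 0 \Rightarrow \phi_t(f_n)\to\phi_t(f)$'', which is precisely $f_n(t)\to f(t)$. Alternatively one can route this through (3): the implication (3) $\Rightarrow$ (2) is immediate from $|f_n(t)-f(t)|\leq C_t\norm{f_n-f}\to 0$, while (2) $\Rightarrow$ (3) follows by contraposition — if no bound $C_t$ existed, choose $g_n$ with $|g_n(t)|>n\norm{g_n}$ (in particular $g_n\neq 0$) and set $f_n=g_n/(n\norm{g_n})$, so that $\norm{f_n}=1/n\to 0$ yet $|f_n(t)|>1$, contradicting (2).

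There is essentially no hard step here; the proposition is a direct unwinding of definitions once linearity of $\phi_t$ is noted. The only point deserving a moment's care is the passage from (2) back to (1) (equivalently to (3)): one must either appeal to metrizability of the norm topology so that sequential continuity upgrades to continuity, or run the normalization/contrapositive trick above to extract the constant $C_t$ explicitly. Everything else is the textbook fact that a linear functional on a normed space is continuous if and only if it is bounded.
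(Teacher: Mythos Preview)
Your argument is correct and complete: linearity of $\phi_t$ is immediate, and the three conditions are exactly the standard equivalent formulations of continuity for a linear functional on a normed space, which you establish cleanly via boundedness and sequential continuity (with the normalization trick handling $(2)\Rightarrow(3)$ directly).

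The paper does not actually supply a proof of this proposition; it is prefaced only by ``it is easy to see that'' and then the text moves on to the Riesz representation and the reproducing kernel. So there is nothing to compare against beyond noting that the paper regards the result as routine, and your write-up fills in precisely the routine details one would expect.
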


By the Riesz Representation Theorem, we are able to construct a kernel $K: \mathscr{E}\times\mathscr{E} \rightarrow \mathbb{R}$ so that $K(\cdot, t)$ is the unique function satisfying
\begin{equation}\label{reprodprop}
	f(t) = \phi_t(f) = \langle K(\cdot, t), f\rangle.
\end{equation}
$K$ is called the \textit{reproducing kernel} generating $\mathscr{H}$, and equation (\ref{reprodprop}) is called the \textit{reproducing property}. In particular, $K(t,s) = \langle K(\cdot, t), K(\cdot, s) \rangle$ shows that $K$ is symmetric.

We have the following \textit{projection principle} for RKHS \cite{gu}.
\begin{prop}
	Let $\mathscr{H}$ be a RKHS with reproducing kernel $K$, and let $\mathscr{H}_0$ be a subspace of $\mathscr{H}$ so that $\mathscr{H}$ is decomposed as $\mathscr{H}_0 \oplus \mathscr{H}_0^{\perp}$. Then the associated kernel decomposition is
	\begin{equation*}
		K(\cdot,t) = K_0(\cdot,t) + K_1(\cdot,t), \quad K_0(\cdot,t)\in \mathscr{H}_0, \;\; K_1(\cdot,t)\in \mathscr{H}_0^{\perp}
	\end{equation*}
iff $K_0$ and $K_1$ are the reproducing kernels for $\mathscr{H}_0$ and $\mathscr{H}_0^{\perp}$, respectively.
\end{prop}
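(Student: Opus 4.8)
The plan is to reduce everything to uniqueness of the reproducing kernel, which follows from the Riesz Representation Theorem as in (\ref{reprodprop}). First I would record that $\mathscr{H}_0$, and hence $\mathscr{H}_0^{\perp}$, is a \emph{closed} subspace of $\mathscr{H}$ --- this is implicit in the hypothesis $\mathscr{H} = \mathscr{H}_0 \oplus \mathscr{H}_0^{\perp}$ --- so that each evaluation functional $\phi_t$, being bounded on $\mathscr{H}$, restricts to a bounded functional on $\mathscr{H}_0$ and on $\mathscr{H}_0^{\perp}$; thus both are RKHS with well-defined reproducing kernels $K_0$ and $K_1$. Throughout I would use that $K_0(\cdot,t)\in\mathscr{H}_0\subseteq\mathscr{H}$ and $K_1(\cdot,t)\in\mathscr{H}_0^{\perp}\subseteq\mathscr{H}$, and that any $h\in\mathscr{H}$ with $\langle h,f\rangle = f(t)$ for all $f\in\mathscr{H}$ must equal $K(\cdot,t)$.

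For the reverse direction I would assume $K_0, K_1$ are the reproducing kernels of $\mathscr{H}_0, \mathscr{H}_0^{\perp}$, set $\tilde K(\cdot,t) := K_0(\cdot,t) + K_1(\cdot,t) \in \mathscr{H}$, and verify that it reproduces on $\mathscr{H}$: writing an arbitrary $f\in\mathscr{H}$ as $f = f_0 + f_1$ with $f_0\in\mathscr{H}_0$, $f_1\in\mathscr{H}_0^{\perp}$, the bilinear expansion of $\langle \tilde K(\cdot,t), f\rangle$ has vanishing cross terms $\langle K_0(\cdot,t), f_1\rangle = \langle K_1(\cdot,t), f_0\rangle = 0$ by orthogonality, leaving $\langle K_0(\cdot,t), f_0\rangle + \langle K_1(\cdot,t), f_1\rangle = f_0(t) + f_1(t) = f(t)$. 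Uniqueness then forces $\tilde K(\cdot,t) = K(\cdot,t)$, which is exactly the asserted kernel decomposition, its summands lying in the correct subspaces by construction.

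For the forward direction I would assume $K(\cdot,t) = K_0(\cdot,t) + K_1(\cdot,t)$ with the summands in $\mathscr{H}_0$ and $\mathscr{H}_0^{\perp}$ --- by uniqueness of orthogonal decomposition this \emph{is} the splitting of $K(\cdot,t)$ into its $\mathscr{H}_0$- and $\mathscr{H}_0^{\perp}$-components. Then for $f\in\mathscr{H}_0$ we get $f(t) = \langle K(\cdot,t), f\rangle = \langle K_0(\cdot,t), f\rangle + \langle K_1(\cdot,t), f\rangle = \langle K_0(\cdot,t), f\rangle$, the last equality because $f\perp\mathscr{H}_0^{\perp}$; since also $K_0(\cdot,t)\in\mathscr{H}_0$, this shows $K_0$ is the reproducing kernel of $\mathscr{H}_0$, and the symmetric argument (swapping the roles of $\mathscr{H}_0$ and $\mathscr{H}_0^{\perp}$) handles $K_1$.

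I do not anticipate a genuine obstacle here; the only care needed is the bookkeeping that the subspaces inherit the RKHS structure --- for which closedness of $\mathscr{H}_0$ is essential --- and the repeated appeals to uniqueness of the reproducing kernel. As a concluding remark one could note that this decomposition is simply the operator-theoretic image of the orthogonal projections $P_0:\mathscr{H}\to\mathscr{H}_0$ and $P_1 = I - P_0$, since $K_0(\cdot,t) = P_0 K(\cdot,t)$ and $K_1(\cdot,t) = P_1 K(\cdot,t)$, from which symmetry and positive semidefiniteness of $K_0$ and $K_1$ also follow automatically.
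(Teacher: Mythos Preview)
Your argument is correct and is the standard one. Note, however, that the paper does not actually supply its own proof of this proposition: it is stated with a citation to \cite{gu} and then used without further justification. There is therefore nothing to compare your approach against; your write-up simply fills in the omitted details, and does so cleanly.
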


We now present a general methodology for solving a wide range of penalized function optimization problems, see e.g. \cite{alm, wahba}. Let $\mathscr{S}$ be a vector space of functions on $\mathscr{E}$ and let $J$ be a nonnegative penalty functional on $\mathscr{S}$. Suppose $J$ has null set $\mathscr{N}$ which is a linear subspace of $\mathscr{S}$, and $J(f+\eta) = J(f)$ for all $\eta\in \mathscr{N}$. Suppose also that there exists some RKHS $\mathscr{H} \subset \mathscr{S}$ so that $J(f) = \langle f,f \rangle_{\mathscr{H}}$ for $f\in\mathscr{H}$. Note that this implies $\mathscr{N} \cap \mathscr{H} = 0$. Finally, consider a finite-dimensional subspace $\mathscr{N}_0 \subset \mathscr{N}$, generated by a basis $\{\zeta_j: j=1,\cdots,\nu\}$. (For our problem, $\mathscr{N}_0= \mathscr{N}$.) Define the vector space
\begin{equation*}
	\mathscr{T} = \mathscr{H} \oplus \mathscr{N}_0
\end{equation*}
Now predictors $x_i\in \mathscr{E}$ and responses $y_i\in\mathbb{R}$, $i=1,\cdots,N$ are given, following the model $y=f(x)+\epsilon$ for $f\in\mathscr{S}$ and an error term $E(\epsilon)=0$. Our goal is to optimize the penalized square error,
\begin{equation}\label{rkhsloss}
	\mathcal{L}(f) =\sum_{i=1}^N (y_i-f(x_i))^2 + \lambda\cdot J(f)
\end{equation}
for some $\lambda>0$. Then the following celebrated result holds:
\begin{thm}[Representer Theorem] \label{representer}
	Any minimizer $\hat{f}\in\mathscr{T}$ of (\ref{rkhsloss}) is of the form
	\begin{equation*}
		\hat{f}(x) = \sum_{i=1}^N \alpha_i K(x_i,x) + \sum_{j=1}^{\nu} \beta_j \zeta_j(x)
	\end{equation*}
\end{thm}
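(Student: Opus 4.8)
The plan is to argue via orthogonal decomposition in the Hilbert space $\mathscr{T} = \mathscr{H} \oplus \mathscr{N}_0$, exploiting the reproducing property to show that any component of a candidate minimizer that is ``orthogonal to the data'' is wasted. First I would fix the minimizer $\hat f$ and consider the finite-dimensional subspace of $\mathscr{H}$ spanned by the representers $\{K(x_i,\cdot): i=1,\dots,N\}$; call it $\mathscr{H}_1$, and let $\mathscr{H}_1^{\perp}$ be its orthogonal complement within $\mathscr{H}$. Then $\mathscr{T} = \mathscr{N}_0 \oplus \mathscr{H}_1 \oplus \mathscr{H}_1^{\perp}$, and I would write $\hat f = \hat f_0 + \hat f_1 + \hat f_\perp$ accordingly. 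The key observation, immediate from the reproducing property (\ref{reprodprop}), is that for every data point $x_i$ we have $\hat f_\perp(x_i) = \langle K(x_i,\cdot), \hat f_\perp\rangle_{\mathscr{H}} = 0$, since $K(x_i,\cdot) \in \mathscr{H}_1$. Hence the fitted values $\hat f(x_i) = \hat f_0(x_i) + \hat f_1(x_i)$ do not depend on $\hat f_\perp$ at all, so the squared-error term $\sum_i (y_i - \hat f(x_i))^2$ is unchanged if we delete $\hat f_\perp$.

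Next I would handle the penalty term. Since $J(f) = \langle f, f\rangle_{\mathscr{H}}$ on $\mathscr{H}$ and is invariant under adding elements of $\mathscr{N}$ (in particular of $\mathscr{N}_0$), we have by Pythagoras in $\mathscr{H}$ that
\begin{equation*}
	J(\hat f) = \langle \hat f_1 + \hat f_\perp, \hat f_1 + \hat f_\perp\rangle_{\mathscr{H}} = \norm{\hat f_1}_{\mathscr{H}}^2 + \norm{\hat f_\perp}_{\mathscr{H}}^2 \geq \norm{\hat f_1}_{\mathscr{H}}^2 = J(\hat f_0 + \hat f_1).
\end{equation*}
Combining the two paragraphs, replacing $\hat f$ by $\hat f_0 + \hat f_1$ strictly decreases (or leaves equal) the total loss $\mathcal{L}$, with strict decrease unless $\hat f_\perp = 0$. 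Since $\hat f$ was assumed to be a minimizer, we conclude $\hat f_\perp = 0$, i.e. $\hat f \in \mathscr{N}_0 \oplus \mathscr{H}_1$. Writing out an arbitrary element of this space gives exactly $\hat f(x) = \sum_{i=1}^N \alpha_i K(x_i,x) + \sum_{j=1}^{\nu} \beta_j \zeta_j(x)$, which is the claimed representation.

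The main technical point to be careful about — the one I expect to be the only real obstacle — is the well-posedness of the orthogonal decomposition: one must know that $\mathscr{H}_1$ is a genuine (closed, finite-dimensional) subspace of $\mathscr{H}$ so that the projection onto $\mathscr{H}_1^{\perp}$ exists, and that the sum $\mathscr{H} \oplus \mathscr{N}_0$ is direct, i.e. $\mathscr{N}_0 \cap \mathscr{H} = 0$, which was already noted to follow from $J$ being a norm-square on $\mathscr{H}$ while vanishing on $\mathscr{N}_0$. Finite-dimensionality of $\mathscr{H}_1$ is automatic since it is spanned by $N$ vectors, so closedness is free. A secondary point worth a sentence is that this argument shows every minimizer has this form but does not by itself assert existence or uniqueness; existence follows because $\mathcal{L}$ restricted to the finite-dimensional $\mathscr{N}_0 \oplus \mathscr{H}_1$ is a continuous coercive (for $\lambda > 0$, once a unisolvency/full-rank condition on the design is in place) function, and uniqueness from strict convexity — but these are exactly the points deferred to the application of this theorem to $\mathcal{H}^{\mathcal{M}}$ in Theorem \ref{mainthm}, so here I would only prove the representation form.
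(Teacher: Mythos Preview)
Your proof is correct and follows essentially the same route as the paper's: both decompose the minimizer into a null-space part, a component in the span of the representers $K(x_i,\cdot)$, and an $\mathscr{H}$-orthogonal remainder, then use the reproducing property to show the remainder contributes nothing to the data-fit term while strictly inflating the penalty unless it vanishes. Your additional remarks on closedness of the finite-dimensional span and the directness of $\mathscr{H}\oplus\mathscr{N}_0$ are good hygiene that the paper's ``rough proof'' leaves implicit.
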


We give a rough proof; for details, see \cite{wahba}.

Decompose $\hat{f} = h+\eta$ with $h\in\mathscr{H}$ and $\eta\in\mathscr{N}_0$. Our goal is to show that $h$ is a linear combination of the functions $K(\cdot, x_i)$, that is $h\in\mathscr{H}_0$ where $\mathscr{H}_0$ is defined as the linear span of $K(\cdot, x_i)$ for $i=1,\cdots,N$. Further decompose $\mathscr{H} = \mathscr{H}_0 \oplus \mathscr{H}_0^{\perp}$ and $h = h_0 + h_1$ with $h_0\in\mathscr{H}_0$, $h_1\in\mathscr{H}_0^{\perp}$. By the reproducing property,
\begin{equation*}
	h_1(x_i) = \langle K(\cdot, x_i), h_1 \rangle = 0
\end{equation*}
Then we perform the following decomposition:
\begin{align*}
	\mathcal{L}(\hat{f}) &= \sum_{i=1}^N (y_i-h(x_i)-\eta(x_i))^2 + \lambda\cdot J(h+\eta) \\
	&= \sum_{i=1}^N (y_i-h(x_i)-\eta(x_i))^2 + \lambda\cdot J(h) \\
	&= \sum_{i=1}^N (y_i-h_0(x_i)-\eta(x_i))^2 + \lambda \cdot \langle h_0 + h_1, h_0 + h_1\rangle_{\mathscr{H}} \\
	&= \sum_{i=1}^N (y_i-h_0(x_i)-\eta(x_i))^2 + \lambda \cdot \langle h_0,h_0\rangle_{\mathscr{H}} + \lambda \cdot \langle h_1,h_1\rangle_{\mathscr{H}} \\
	&= \mathcal{L}(h_0+\eta) + \lambda\cdot J(h_1)
\end{align*}
which is greater than $\mathcal{L}(h_0+\eta)$ unless $J(h_1)=0$, that is $h_1\in \mathscr{H}\cap\mathscr{N} = 0$. $\qed$

\bigskip
After some preparation, Theorem \ref{representer} will directly apply to our problem.

\subsection{Proof of Theorem \ref{mainthm}}

\begin{proof}
The total space is $\mathscr{S} = W^{2,2}(\mathcal{M})$. This is a Hilbert space with the usual inner product of functions, however we instead use the \textit{Hessian product}
\begin{equation*}
	\llangle f,g \rrangle_H := \int_{\mathcal{M}} \sum_{i,j=1}^d \left( \frac{\partial^2 f}{\partial x_i \partial x_j}(p) \frac{\partial^2 g}{\partial x_i \partial x_j}(p)\right) d\mu(p)
\end{equation*}
which is well-defined for any set of local orthonormal coordinates $\{x_j\}$, so that $\llangle f,f \rrangle_H = \mathcal{H}^{\mathcal{M}}(f)$ as desired. The null space $\mathscr{N} = \mathcal{N}^{\mathcal{M}}$ consists of functions $\eta$ with everywhere vanishing Hessian, so that $\llangle f,\eta \rrangle_H=0$ and $\mathcal{H}^{\mathcal{M}}(f+\eta) = \mathcal{H}^{\mathcal{M}}(f)$ for all $f\in \mathscr{S}$.

Also, let $d\omega$ be the induced volume form and $N$ the outwards-oriented unit normal vector field on $\partial\mathcal{M}$. Define $\mathscr{H}$ as the subspace of all functions $f$ with $Nf=0$ on the boundary. Consider the partitioning of $\mathscr{H}$ into subsets $\mathscr{H}_c$ consisting of fixed boundary average,
\begin{equation*}
	\langle f \rangle_{\partial \mathcal{M}} := \frac{1}{\text{vol}( \partial\mathcal{M})} \int_{\partial\mathcal{M}} f d\omega =c, \quad c\in\mathbb{R}
\end{equation*}
which is well-defined due to the trace theorem. It is enough to prove the result for each $\mathscr{H}_c$ as the representer form of $f$ does not depend on the stratification. Furthermore, for each value of $\langle f \rangle_{\partial \mathcal{M}}=c$, we may transform the problem into that of $c=0$ by simply subtracting $c$ from each response value $y_i$, performing the minimization, and adding the constant function $c$ (which is in the span of $\zeta_1,\cdots, \zeta_{\nu}$) to the result. Thus, without loss of generality, we may additionally assume $\langle f \rangle_{\partial \mathcal{M}} = 0$ for the functions in $\mathscr{H}$. By passing to the universal cover, $Nf=0$ and $\mathcal{H}^{\mathcal{M}}(f) = 0$ together imply $f$ is constant, showing that $\llangle \cdot,\cdot \rrangle_H$ defines an inner product on $\mathscr{H}$ if $\partial\mathcal{M} \neq \varnothing$.

Recall Green's first identity:
\begin{equation*}
	\int_{\mathcal{M}} u\Delta_{\mathcal{M}} v d\mu + \int_{\mathcal{M}} \langle \nabla u, \nabla v\rangle d\mu = \int_{\partial\mathcal{M}} uNv d\omega.
\end{equation*}
With two applications of this formula, we can show that $\mathscr{H}$ is actually a RKHS with reproducing kernel a Green's function $G$ with respect to the biharmonic operator $\Delta_{\mathcal{M}}^2$,
\begin{equation}\label{rkhsbi}
	\Delta_{\mathcal{M}}^2 G(\cdot,q) = \delta_q
\end{equation}
By considering (\ref{rkhsbi}) as a system of Poisson equations $\Delta_{\mathcal{M}}G(\cdot,q) = F(\cdot,q)$ and $\Delta_{\mathcal{M}}F(\cdot,q) = \delta_q$, we may choose two boundary conditions for $G$ and $F$, respectively. For the former, we impose Dirichlet conditions $\gamma G(\cdot,q)=0$; for the latter, Neumann conditions. Note, however, by the divergence theorem
\begin{equation*}
	\int_{\mathcal{M}} \text{div} X d\mu = \int_{\partial\mathcal{M}} \langle X,N\rangle d\omega
\end{equation*}
applied to $X=\nabla F(\cdot,q)$, the constraint
\begin{equation*}
	\int_{\partial\mathcal{M}} NF(\cdot,q)(p) d\omega(p) = \int_{\mathcal{M}} \Delta_{\mathcal{M}}F(\cdot,q)(p) d\mu(p) = \int_{\mathcal{M}}\delta_q(p) d\mu(p)= 1
\end{equation*}
must hold. In particular, the simplest condition $NF(\cdot,q) = 0$ is prohibited. Instead, we require that
\begin{equation*}
	NF(\cdot,q) = \frac{1}{\text{vol}(\partial \mathcal{M})}, \quad\forall q\in \mathcal{M}
\end{equation*}

With the setup in place, for any $q\in\mathcal{M}$, we compute the following integral with local coordinates chosen around each $p\in \partial\mathcal{M}$ so that $\partial/\partial x_n$ is the unit normal.
\begin{align*}
	\llangle f, G(\cdot, q)\rrangle_H &= \int_{\mathcal{M}} \sum_{i,j=1}^d \left( \frac{\partial^2 f}{\partial x_i \partial x_j}(p) \frac{\partial^2 G(\cdot, q)}{\partial x_i \partial x_j}(p)\right) d\mu(p)\\
	&= \sum_{i=1}^d \int_{\mathcal{M}} \Big\langle \nabla \left( \frac{\partial f}{\partial x_i} \right)(p), \nabla \left( \frac{\partial G(\cdot, q)}{\partial x_i} \right)(p) \Big\rangle d\mu(p) \\
	&= \sum_{i=1}^d \int_{\partial\mathcal{M}} \left( \frac{\partial f}{\partial x_i} \right)(p) \; N \left( \frac{\partial G(\cdot, q)}{\partial x_i} \right)(p) d\omega(p) \\
	&\quad\quad- \int_{\mathcal{M}} \sum_{i=1}^d \left( \frac{\partial f}{\partial x_i} \right)(p) \cdot \Delta_{\mathcal{M}} \left( \frac{\partial G(\cdot, q)}{\partial x_i} \right)(p) d\mu(p) \\
	&= -\int_{\mathcal{M}} \big\langle \nabla f(p), \nabla \left( \Delta_{\mathcal{M}} G(\cdot, q) \right)(p) \big\rangle d\mu(p) \\
	&= \int_{\mathcal{M}} f(p) \cdot \Delta_{\mathcal{M}}^2 G(\cdot, q)(p) d\mu(p)- \int_{\partial\mathcal{M}} f(p) N(\Delta_{\mathcal{M}} G(\cdot, q))(p) d\omega(p) \\
	&= f(q).\color{white} \int
\end{align*}
The first boundary integral vanishes since $\partial f/\partial x_n=0$ and $\partial G(\cdot,q)/\partial x_i=0$ on the boundary for $i<n$. Also, since $N(\Delta_{\mathcal{M}} G(\cdot, q))$ is the uniform distribution on $\partial\mathcal{M}$, second boundary integral equals $\langle f \rangle_{\partial\mathcal{M}}=0$. Thus $\mathscr{H}$ is a RKHS with kernel $G$, and the Representer Theorem applies.

If $\partial\mathcal{M} = \varnothing$, instead simply define $\mathscr{H}$ as the orthogonal complement of $\mathcal{N}^{\mathcal{M}}$ in $W^{2,2}(\mathcal{M})$ with respect to the usual inner product, so that $\mathscr{T} = \mathscr{H} \oplus \mathcal{N}^{\mathcal{M}}$ is equal to $W^{2,2}(\mathcal{M})$. As $\mathscr{H} \cap \mathcal{N}^{\mathcal{M}} = \varnothing$, $\llangle \cdot,\cdot \rrangle_H$ defines an inner product on $\mathscr{H}$, and the above argument applies with the boundary integral terms vanishing trivially. Thus the Theorem is proved for both cases.
\end{proof}

\section{Hessian Splines}

\subsection{The Algorithm}
In this Section, we give an algorithm for computing splines given observations lying on some unknown submanifold of the feature space. The articulation manifold $\mathcal{M}$ and the locally isometric embedding $\phi: \mathcal{M} \rightarrow \mathbb{R}^n$ are now unknown. $N$ points $p_1,\cdots, p_N$ are i.i.d. sampled from a probability distribution $\mu$ on $\mathcal{M}$ which is strictly positive on the interior. (If there exist regions $\mathcal{R}$ of zero density, simply redefine the manifold $\mathcal{M}$ to be $\mathcal{M}\setminus \text{int}(\mathcal{R})$.) We observe the embedded data points $\mathbf{x}_i = \phi(p_i)$ and their corresponding response values $y_i = g(\mathbf{x}_i) + \epsilon_i$, $\epsilon_i \sim N(0,\sigma^2)$, where $g=f\circ \phi^{-1}$. Clearly, the manifold smoothing spline formulation of Section 3 is intractable, as we do not even know the space our functions are defined on.

One possibility is to first run a nonlinear dimensionality reduction scheme on the data, then apply the TPS algorithm to the resulting Euclidean coordinates. However, this requires combining two separate algorithms, increasing computational complexity and inheriting their various limitations. For example, Isomap is prone to introducing short-circuits in noisy data \cite{sax}, causing undesirable folds and creases which greatly impacts the performance of the proceeding TPS algorithm.

Instead, we focus on computable approximations to the penalty term $\mathcal{H}^{\mathcal{M}}$. In particular, the Hessian Eigenmaps algorithm gives an explicit construction of the matrix $\hat{\mathcal{H}}^{\mathcal{M}}(\mathbf{X})$ which approximates
\begin{equation}\label{esteq}
	\mathcal{H}^{\mathcal{M}}(f) \approx g(\mathbf{X})^{\top} \hat{\mathcal{H}}^{\mathcal{M}}(\mathbf{X}) g(\mathbf{X})
\end{equation}
for any pair of functions $f$ and $g=f\circ \phi^{-1}$. This allows for the estimation of $\mathcal{H}^{\mathcal{M}}(f)$ given only the observed values $f(p_i) = g(\mathbf{x}_i)$.

Recall that the algorithm requires $\mathcal{M}$ to be an open region of $\mathbb{R}^d$ and $\phi$ to be locally isometric. However, the construction itself does not depend on $\mathcal{M} \subset \mathbb{R}^d$, which only comes into play when computing the embedding coordinates $\{\theta^j\}$. Hence we only need the weaker condition that $\mathcal{M}$ is flat. This allows us to apply our algorithm to a larger class of topologically nontrivial spaces, such as the cylinder $\mathbb{R} \times S^1$, the flat torus $S^1 \times S^1$, and their higher-dimensional analogues. In contrast, most widely-used nonlinear dimensionality reduction algorithms require the assumption $\mathcal{M} \subset \mathbb{R}^d$ \cite{mlta}.

\bigskip
Thus, we are motivated to replace the loss $\mathcal{L}^{SS}(f;\lambda)$ by the ``Hessian spline'' loss
\begin{equation} \label{hsloss}
	\mathcal{L}^{HS}(g;\lambda) := \norm{\mathbf{y} - g(\mathbf{X})}_2^2 + \lambda g(\mathbf{X})^{\top} \hat{\mathcal{H}}^{\mathcal{M}}(\mathbf{X}) g(\mathbf{X})
\end{equation}
defined for functions $g: \phi(\mathcal{M}) \rightarrow \mathbb{R}$. The loss is quadratic and depends only on the values $g(\mathbf{X})$, so that the fitted values $\hat{g}^{HS}(\mathbf{X})$ may be computed simply as:
\begin{equation*}
	\hat{g}^{HS}(\mathbf{X}) = (I_N + \lambda \hat{\mathcal{H}}^{\mathcal{M}}(\mathbf{X}))^{-1} \mathbf{y},
\end{equation*}
a \textit{linear smoother} of $\mathbf{y}$, and out-of-sample estimates $\hat{g}(\mathbf{x})$ for $\mathbf{x}$ (assumed to be) lying on $\phi(\mathcal{M})$ may be calculated using neighborhood interpolation techniques. The smoothing parameter $\lambda$ may again be chosen using a form of cross-validation on a development set.

If we are additionally given weights $\mathbf{w}=(w_1,\cdots,w_N)^{\top}$ representing the reliability of each datum $y_i$, the weighted loss
\begin{equation*}
	\mathcal{L}^{HS}(g;\lambda,\mathbf{w}) = \sum_{i=1}^N w_i(y_i-g(\mathbf{x}_i))^2 + \lambda g(\mathbf{X})^{\top} \hat{\mathcal{H}}^{\mathcal{M}}(\mathbf{X}) g(\mathbf{X})
\end{equation*}
is minimized by
\begin{equation*}
	\hat{g}^{HS}(\mathbf{X}) = (\mathbf{W} + \lambda  \hat{\mathcal{H}}^{\mathcal{M}}(\mathbf{X}))^{-1} \mathbf{W} \mathbf{y},
\end{equation*}
where $\mathbf{W} = \diag (\mathbf{w})$. The weights may be predetermined according to the sampling process, or smaller values may be assigned to outliers in order to increase stability. In particular, the weights may be iteratively updated by predicting how likely each point is to be an outlier at each step. A data point $\mathbf{x}_i$ is more likely to be an outlier if its true response $y_i$ is far away from a preliminary fitted response. Pick a decreasing function $\rho: [0,\infty) \rightarrow [0,1]$, $\rho(0)=1$, representing how much a datum should be downweighted given this distance. A typical example is $\rho(r) = \exp(-r/2\hat{\sigma}_p)$ for a preliminary estimate $\hat{\sigma}_p$ of $\sigma$. The reweighting scheme is described in Algorithm 1.

\begin{algorithm}
	\caption{Iterative updating procedure for weights.}
	\KwData{predictors $\mathbf{X}$, responses $\mathbf{y}$}
	initialize $w_i=1$ for $i=1,\cdots,N$\;
	\While{$(w_i)$ has not converged}{
		compute current estimates $\hat{g}^{HS}(\mathbf{X})$\;
		\For{$i=1,\cdots,N$}{
			compute reliability $r_i=|y_i-\hat{g}^{HS}(\mathbf{x}_i)|$\;
			update $w_i \leftarrow \rho(r_i)w_i$\;
		}
	normalize $\mathbf{w}$ so that $\sum_{i=1}^N w_i = N$\;
	}
\end{algorithm}

It should be pointed out that while the Hessian spline estimator was motivated and studied on the premise of zero curvature, the algorithm itself (unlike Hessian Eigenmaps) does not require this condition per se. For any set of observations, it will nevertheless compute response values fitted with the loss (\ref{hsloss}). The penalty term now measures the Euclidean Hessian of the data projected to the tangent space, which should still give a reasonable measure of how curved the regression function is, unless the data has regions of disproportionately high curvature.

\subsection{Out-of-sample Prediction}

The Hessian spline does \textit{not} give an estimator for the regression function $g$ on the entire manifold, as the problem is no longer cast in a functional form. In particular, it is unable to say anything about the value of $g(\mathbf{x}^*)$ for $\mathbf{x}^*\neq \mathbf{x}_i$, $i=1,\cdots,N$. One method would be to consider the augmented data $\mathbf{X}^* = (\mathbf{X}, \mathbf{x}^*)$ and find the minimizer $y^*$ of the updated Hessian loss by solving the quadratic optimization problem
\begin{equation*}
	y^* = \argmin_{y\in\mathbb{R}}\, (g(\mathbf{X})^{\top}, y) \hat{\mathcal{H}}^{\mathcal{M}} (\mathbf{X}^*) (g(\mathbf{X})^{\top}, y)^{\top}.
\end{equation*}
However this would be needlessly computationally expensive, as the global Hessian estimator must be recomputed. Instead, we only take into account the neighborhood $N(\mathbf{x}^*)$ of the $K$ data points $\mathbf{x}_i$ closest to $\mathbf{x}^*$ (including $\mathbf{x}^*$ itself). As per the philosophy of manifold learning, the local patch $N(\mathbf{x}^*)$ approximates the tangent space $T_{\mathbf{x}^*}\mathcal{M} \simeq \mathbb{R}^d$, embedded as an affine subset of the ambient space via an isometry $\pi: \mathbb{R}^d\rightarrow  \mathbb{R}^n$. The $d$-dimensional local tangent coordinates of $N(\mathbf{x}^*)$ may be obtained via PCA or a robust version thereof \cite{rpca}.
Applying TPS to the transformed dataset (excluding $\mathbf{x}^*$) will then provide a functional estimator of $g$, defined on the entire region. Evaluating at the local coordinates corresponding to $\mathbf{x}^*$ will give an out-of-sample prediction minimizing local curvature.

An even more efficient method would be to suppose $g$ is locally linear and perform local linear regression on $N(\mathbf{x}^*)$. However, this method assumes that the distances between neighboring points are much smaller compared to the scale of curvature of $g$ (as a graph $\mathbb{R}^d \rightarrow \mathbb{R}$). If not, the predictions will always be positively biased if $g$ is convex at $\mathbf{x}^*$, and negatively biased if $g$ is concave.

\bigskip
Having a prediction scheme for unobserved data suggests an application of the spline algorithm as a supervised nonlinear classifier. Suppose we are given points $\mathbf{x}_i$ on an unknown manifold with known binary labels $L_i\in\{0,1\}$ as our training set. The algorithm estimates a smoothed regression function $\hat{g}^{HS}(\mathbf{X})$ that approximates $L_i$. Since it is a linear smoother, the estimated responses $\hat{L}_i$ will be contained in the interval $[0,1]$. We classify $\mathbf{x}_i$
$$\hat{c}(\mathbf{x}_i)= \begin{cases}
	0 \quad \hat{g}^{HS}(\mathbf{x}_i)\leq 0.5 \\
	1 \quad \hat{g}^{HS}(\mathbf{x}_i)> 0.5
\end{cases}$$
Similarly, given a test point $\mathbf{x}^*$, we compute the out-of-sample prediction $\hat{g}^{HS} (\mathbf{x}^*) = y^*$ and label $\mathbf{x}^*$ as
$$\hat{c}(\mathbf{x}^*)= \begin{cases}
	0 \quad \hat{g}^{HS}(\mathbf{x}^*)\leq 0.5 \\
	1 \quad \hat{g}^{HS}(\mathbf{x}^*)> 0.5
\end{cases}$$
 Using locally linear interpolation will guarantee $y^*\in [0,1]$, which may be useful when interpreting $y^*$ as the probability of labeling as 1, such as for simulation purposes. Also, the smoothing aspect of the algorithm ensures robustness against noisy data, i.e. many wrong labels. Increasing the hyperparameter $\lambda$ in this aspect will increase robustness while potentially losing information on the finer details of the dataset. In practice, $\lambda$ may be tuned by testing accuracy on a development set.

If a multiclass classifier is needed, with labels $C=\{c_1,\cdots,c_J\}$, we may construct for each $j=1,\cdots,J$ a binary classifier $\hat{g}_j^{HS}$ which classifies points as either $c_j$ or not-$c_j$ as above, and output the most likely label:
$$\hat{c}(\mathbf{x}^*) = \argmax_{j=1,\cdots ,J} \hat{g}_j^{HS}(\mathbf{x}^*).$$

Some other applications are boundary detection, smooth denoising, and random field estimation.

\subsection{Asymptotic Error}

We analyze the asymptotic properties of the Hessian loss approximation (\ref{esteq}), which provides a solid rationale for the use of the Hessian spline estimator.
The approximation consists of two parts: the second order Taylor expansion of $f$ around $\mathbf{x}_i$, and the replacement of the finite sum of the point Hessian norm values by the Hessian loss integral,
\begin{equation*}
	\frac{1}{N} \sum_i \lVert{\Hess_f^{\tan}(\mathbf{x}_i)\rVert}_F^2 \approx \int_{\mathcal{M}}\nolimits \lVert{\Hess_f^{\tan}(p)\rVert}_F^2 d\mu(p)
\end{equation*}
The latter is simply the law of large numbers:

\begin{thm}[Strong Law of Large Numbers]
	Let $\mathbf{x}_i$, $i=1,\cdots,N$ be $\mathcal{M}$-valued random variables sampled independently from a probability measure $d\mu$ on $\mathcal{M}$. Let also $\eta: \mathcal{M} \rightarrow \mathbb{R}$ be a measurable function, so that $\eta(\mathbf{x}_1)$ is Lebesgue integrable, that is the expected value
	\begin{equation*}
		\mathbb{E}\eta(\mathbf{x}_1) = \int_{\mathcal{M}} \eta(p)d\mu(p)
	\end{equation*}
	exists. Then the sample mean converges almost surely to the expectation:
	\begin{equation*}
		\overline{\eta(\mathbf{x})}_N := \frac{1}{N} \sum_{i=1}^N \eta(\mathbf{x}_i) \xrightarrow{\text{a.s.}} \mathbb{E}\eta(\mathbf{x}_1).
	\end{equation*}
\end{thm}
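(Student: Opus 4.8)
The plan is to reduce the statement to the classical one-dimensional Kolmogorov strong law of large numbers, which concerns i.i.d. real-valued random variables with finite first moment. Set $Y_i := \eta(\mathbf{x}_i)$ for $i=1,\dots,N$. These are real-valued random variables, and the first task is to verify that they inherit the i.i.d. property from the $\mathbf{x}_i$: independence holds because each $Y_i$ is a measurable function of $\mathbf{x}_i$ alone, so $\sigma(Y_i) \subseteq \sigma(\mathbf{x}_i)$ and the latter $\sigma$-algebras are independent; identical distribution holds because the law of $Y_i$ is the pushforward measure $\eta_\ast \mu$, which is independent of $i$. The integrability hypothesis is exactly $\mathbb{E}|Y_1| = \int_{\mathcal{M}} |\eta|\,d\mu < \infty$, and by definition $\mathbb{E}Y_1 = \int_{\mathcal{M}} \eta\,d\mu$. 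Granting the classical scalar SLLN, applying it to $(Y_i)$ immediately gives $\frac1N \sum_{i=1}^N Y_i \to \mathbb{E}Y_1$ almost surely, which is precisely the asserted convergence $\overline{\eta(\mathbf{x})}_N \to \mathbb{E}\eta(\mathbf{x}_1)$. In the write-up I would state exactly this reduction and cite Kolmogorov's SLLN for the scalar case.

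For completeness I outline how the scalar statement itself would be proved, by the standard truncation method. Put $Z_i := Y_i \mathbf{1}_{\{|Y_i| \le i\}}$. First, $\sum_{i\ge 1}\mathbb{P}(|Y_i| > i) = \sum_{i\ge 1}\mathbb{P}(|Y_1| > i) \le \mathbb{E}|Y_1| < \infty$ by bounding the tail sum by the integral of $|Y_1|$; hence by the first Borel--Cantelli lemma $Z_i = Y_i$ for all sufficiently large $i$ almost surely, so it suffices to prove the law for $(Z_i)$. Next one estimates $\sum_{i\ge 1} i^{-2}\Var(Z_i) \le \sum_{i\ge 1} i^{-2}\,\mathbb{E}[Y_1^2 \mathbf{1}_{\{|Y_1|\le i\}}] \le C\,\mathbb{E}|Y_1| < \infty$, whereupon Kolmogorov's convergence criterion makes $\sum_{i\ge1} i^{-1}(Z_i - \mathbb{E}Z_i)$ converge a.s., and Kronecker's lemma converts this into $\frac1N\sum_{i=1}^N (Z_i - \mathbb{E}Z_i) \to 0$ a.s. Finally $\mathbb{E}Z_i \to \mathbb{E}Y_1$ by dominated convergence, so the Ces\`aro averages of $(\mathbb{E}Z_i)$ also converge to $\mathbb{E}Y_1$; combining the last two facts finishes the argument.

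The only genuinely delicate computation is the variance bound $\sum_{i\ge 1} i^{-2}\,\mathbb{E}[Y_1^2\mathbf{1}_{\{|Y_1|\le i\}}] \le C\,\mathbb{E}|Y_1|$, handled by exchanging sum and expectation via Tonelli and using the elementary estimate $\sum_{i\ge \max(1,|y|)} i^{-2} \le C/\max(1,|y|)$, so that $y^2$ times this tail is controlled by $C|y|$. Everything else --- the Borel--Cantelli step, Kronecker's lemma, dominated convergence --- is routine bookkeeping. Because the reduction to the scalar case is immediate, the generality of allowing $\mathcal{M}$-valued $\mathbf{x}_i$ costs nothing beyond the pushforward observation, which is why in practice I would present only that reduction and defer the scalar SLLN to a reference.
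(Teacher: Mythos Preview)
Your proposal is correct. The paper does not actually prove this theorem: it simply states the classical Strong Law of Large Numbers as a known result and immediately applies it with $\eta(p)=\lVert\Hess_f^{\tan}(p)\rVert_F^2$. Your reduction to the scalar Kolmogorov SLLN via the pushforward $Y_i=\eta(\mathbf{x}_i)$ is exactly the right observation and is more than the paper itself provides; the outlined truncation argument for the scalar case is also standard and accurate, though in context a citation would suffice.
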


Convergence follows by taking the function $\eta(p)=\lVert{\Hess_f^{\tan}(p)\rVert}_F^2$, which is integrable since the Hessian of $f$ is bounded, either by smoothness and the compactness of $\mathcal{M}$, or any other appropriate $C^2$ regularity conditions.

The asymptotic behavior of this convergence is governed by the Central Limit Theorem:

\begin{thm}[Lindeberg-L\'evy Central Limit Theorem]
	In the above setting, if the variance also exists i.e. $\Var(\eta(\mathbf{x}_1))= \sigma_{\eta}^2 <\infty$, the limiting distribution of $\overline{\eta(\mathbf{x})}_N$ is given by
	\begin{equation*}
		\sqrt{N} \left(\overline{\eta(\mathbf{x})}_N- \mathbb{E}\eta(\mathbf{x}_1)\right) \xrightarrow{d} Z, \quad Z\sim N\left(0,\sigma_{\eta}^2\right)
	\end{equation*}
\end{thm}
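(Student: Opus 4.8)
Before seeing how the authors dispatch this (it is a classical result invoked as a black box), here is how I would prove it.

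The plan is to use the method of characteristic functions. First I would reduce to the centered case: set $Y_i := \eta(\mathbf{x}_i) - \mathbb{E}\eta(\mathbf{x}_1)$, so that the $Y_i$ are i.i.d. with $\mathbb{E}Y_i = 0$ and $\Var(Y_i) = \sigma_\eta^2$, and the quantity of interest becomes $S_N := \sqrt{N}\bigl(\overline{\eta(\mathbf{x})}_N - \mathbb{E}\eta(\mathbf{x}_1)\bigr) = N^{-1/2}\sum_{i=1}^N Y_i$. Writing $\varphi$ for the common characteristic function of the $Y_i$, independence gives $\mathbb{E}[e^{\,itS_N}] = \varphi(t/\sqrt{N})^N$ for every fixed $t\in\mathbb{R}$.

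Next I would expand $\varphi$ near the origin. Since $\mathbb{E}|Y_1|^2 < \infty$, the pointwise bound $\bigl|e^{\,ix}-1-ix+\tfrac12 x^2\bigr| \le \min(|x|^3/6,\ x^2)$ together with the dominated convergence theorem yields $\varphi(s) = 1 - \tfrac12\sigma_\eta^2 s^2 + o(s^2)$ as $s\to 0$. Substituting $s = t/\sqrt{N}$ gives $\varphi(t/\sqrt{N}) = 1 - \tfrac{\sigma_\eta^2 t^2}{2N} + o(1/N)$, and raising this to the $N$-th power — using $\log(1+z)=z+O(z^2)$ for small $z$, or the elementary estimate $|w^N-z^N|\le N|w-z|$ for $|w|,|z|\le 1$ — shows $\varphi(t/\sqrt{N})^N \to e^{-\sigma_\eta^2 t^2/2}$ as $N\to\infty$. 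The right-hand side is precisely the characteristic function of the $N(0,\sigma_\eta^2)$ law. Finally I would invoke L\'evy's continuity theorem: pointwise convergence of $\mathbb{E}[e^{\,itS_N}]$ to a function continuous at $t=0$ forces that limit to be the characteristic function of a probability measure and forces $S_N$ to converge to it in distribution, which is the assertion.

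The step I expect to be the main obstacle is the expansion of $\varphi$: one must extract the $o(s^2)$ remainder rigorously from the mere \emph{existence} of the second moment, rather than from a third moment (which would also furnish a Berry--Esseen-type rate). This is exactly where the truncated Taylor bound $\min(|x|^3/6,\ x^2)$ and dominated convergence do the work. An alternative, characteristic-function--free route is Lindeberg's replacement argument — swapping the $Y_i$ one at a time for i.i.d.\ Gaussians of matching mean and variance, truncating at scale $\epsilon\sqrt{N}$, and controlling the error through a third-order Taylor expansion of a fixed smooth test function — but in the i.i.d.\ setting the characteristic-function proof above is shorter, so I would default to it.
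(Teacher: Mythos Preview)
Your anticipation was correct: the paper does not prove this theorem at all. It is stated as the classical Lindeberg--L\'evy CLT and immediately applied to conclude that the approximation error is $O_p(N^{-1/2})$; no argument is offered. Your characteristic-function proof is the standard one and is correct as written --- the key technical point you flag (extracting the $o(s^2)$ remainder from the second moment alone via the bound $\min(|x|^3/6,\,x^2)$ and dominated convergence) is exactly the right place to be careful, and you handle it properly.
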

where $\xrightarrow{d}$ denotes convergence in distribution. Thus, the approximation error is of order $O_p(N^{-1/2})$.

\bigskip
For the first part, the error due to the Hessian matrix approximation scheme (\ref{scheme}) in each entry consists of the third order Taylor remainder terms of $f$:
\begin{equation}\label{taylorerr}
	\frac{\partial^2 f}{\partial x^{\alpha} \partial x^{\beta}} (\mathbf{x}_i) - \sum_j  \mathbf{H}_{(\alpha \beta) ,\, j}^{(i)} f(\mathbf{x}_j) = \frac{1}{6} \sum_{k, \ell,m} \frac{\partial^3 f}{\partial x^k \partial x^{\ell} \partial x^m}(\mathbf{x}_{i,j}^*) \left( \sum_j \mathbf{H}_{(\alpha \beta) ,\, j}^{(i)} \epsilon_{j,k}^{(i)} \epsilon_{j,\ell}^{(i)} \epsilon_{j,m}^{(i)}\right)
\end{equation}
for some $\mathbf{x}_{i,j}^*\in U_{\mathbf{x}_i}$. The second- and third-order partial derivatives of $f$ are assumed to be uniformly bounded on $\mathcal{M}$. To evaluate the terms $\mathbf{H}_{(\alpha\beta)}^{(i)}$ and $\epsilon_j^{(i)}$, we require bounds on the size of the neighborhoods $U_{\mathbf{x}_i}$. As $n$ (data density) increases, the expected distances to the $K$ closest points decrease proportionally to $n^{-1/d}$,
\begin{equation*}
	\lVert\epsilon_j^{(i)}\rVert \leq \diam(U_{\mathbf{x}_i}) = O_p(N^{-1/d}) \quad \forall i=1,\cdots,N, \quad\forall\mathbf{x}_j\in N(\mathbf{x}_i)
\end{equation*}
keeping in mind that geodesic distance is approximated by Euclidean distance up to order 3. The same bound will thus hold for the vectors $\mathbf{u}_k^{(i)}$. Unfortunately, the normalization step
\begin{equation*}
	\mathbf{H}_{(\alpha \beta)}^{(i)\,\,\top} \left( \mathbf{u}_k^{(i)}*\mathbf{u}_{\ell}^{(i)} \right) = 2\delta_{k,\ell}^{\alpha, \beta}
\end{equation*}
then forces each term of $\mathbf{H}^{(i)}$ to be on the order of $N^{2/d}$. Nevertheless, the entire right-hand side of (\ref{taylorerr}) is still $O_p(N^{-1/d})$ (compare to the actual entries of $\hat{\mathcal{H}}^{\mathcal{M}}(\mathbf{X})$, which are of order $N^{4/d}$). Thus, the total error incurred in Hessian loss estimation is:
\begin{equation*}
	|\mathcal{L}^{HS}(f\circ \phi^{-1};\lambda) - \mathcal{L}^{SS}(f;\lambda)| = O_p(N^{-1/2}+N^{-1/d})
\end{equation*}
which, unless the data is one-dimensional, is simply $O_p(N^{-1/d})$. Thus, as with many manifold learning algorithms \cite{mlta}, we may circumvent the curse of dimensionality as the dimension of the feature space $n$ has been replaced by the intrinsic dimension $d$ in the exponent.
Also, since $\mathcal{L}^{SS}$ is uniquely minimized by the spline $\hat{f}^{SS}$ and $\mathcal{L}^{HS}$ is minimized by any function interpolating $\hat{g}^{HS}(\mathbf{X})$, it follows that
\begin{equation*}
	\sup_{i=1,\cdots,N} \left| \hat{g}^{HS}(\mathbf{x}_i)- \hat{f}^{SS}\circ \phi^{-1}(\mathbf{x}_i) \right| \leq \lVert\hat{g}_0^{HS} - \hat{f}^{SS}\rVert_{\infty} = O_p(N^{-1/d})
\end{equation*}
where $\hat{g}_0^{HS}$ is the element in the affine space of functions interpolating $\hat{g}^{HS}(\mathbf{X})$ which is closest to $\hat{f}^{SS}$. Thus the Hessian spline estimator is able to efficiently approximate the true spline function at the given data points.

\subsection{Robustness}

We discuss how sensitive our algorithm is to noise in both the response values $y_i$ and embedded predictors $\mathbf{x}_i$. Recall that our model is $\mathbf{y} = g(\mathbf{X}) + \epsilon$, with $\epsilon\sim N(0,\sigma^2 I_N)$ an $N$-dimensional normally distributed random variable. Thus, the weighted Hessian spline estimator is also normally distributed with expected value the ideal fit to the exact values $y_i=g(\mathbf{x}_i)$,
\begin{equation*}
	\mathbb{E}_{\mathbf{y}}\left( \hat{g}^{HS}(\mathbf{X})\right) = (\mathbf{W} + \lambda  \hat{\mathcal{H}}^{\mathcal{M}}(\mathbf{X}))^{-1} \mathbf{W} g(\mathbf{X}),
\end{equation*}
and variance
\begin{equation}\label{noisevar}
	\Var_{\mathbf{y}}\left( \hat{g}^{HS}(\mathbf{X})\right) = \sigma^2 \cdot (\mathbf{W} + \lambda  \hat{\mathcal{H}}^{\mathcal{M}}(\mathbf{X}))^{-1} \mathbf{W}^2 (\mathbf{W} + \lambda  \hat{\mathcal{H}}^{\mathcal{M}}(\mathbf{X}))^{-1}.
\end{equation}
Since $\mathbf{W}$ and $\hat{\mathcal{H}}^{\mathcal{M}}$ are both positive-definite, (\ref{noisevar}) immediately implies
\begin{equation*}
	\norm{\Var_{\mathbf{y}}\left( \hat{g}^{HS}(\mathbf{X})\right)}_2 \leq \sigma^2
\end{equation*}
with respect to the matrix 2-norm $\norm{\mathbf{A}}_2 = \sigma_{\max}(\mathbf{A})$. Noting that the 2-norm is equal to the spectral radius $\rho(\mathbf{A})$ for Hermitian $\mathbf{A}$ and making use of the following:

\begin{prop}
	For an $N\times N$ Hermitian matrix $\mathbf{A}$ and $N$-vector $v$, $|v^{\top}Av|\leq \rho(\mathbf{A})\cdot \norm{v}^2$. In particular, the diagonal elements satisfy $|A_{ii}| \leq \rho(\mathbf{A})$.
\end{prop}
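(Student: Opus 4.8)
The plan is to reduce the statement to the spectral theorem for Hermitian matrices. Since $\mathbf{A}$ is Hermitian, there exists an orthonormal basis $\{u_1,\dots,u_N\}$ of $\mathbb{C}^N$ consisting of eigenvectors, $\mathbf{A}u_k = \lambda_k u_k$, with all $\lambda_k$ real, and by definition $\rho(\mathbf{A}) = \max_k |\lambda_k|$. (When $\mathbf{A}$ is complex, the quadratic form is to be read with the conjugate transpose, $v^*\mathbf{A}v$; in the paper's application $\mathbf{A} = \mathbf{W} + \lambda\hat{\mathcal{H}}^{\mathcal{M}}$ is real symmetric, so $v^* = v^\top$ and no distinction arises.) Expanding an arbitrary vector as $v = \sum_{k=1}^N c_k u_k$, orthonormality gives $\norm{v}^2 = \sum_k |c_k|^2$, and since the cross terms $\langle u_k, u_\ell\rangle$ vanish for $k\neq \ell$,
\begin{equation*}
	v^* \mathbf{A} v = \sum_{k=1}^N \lambda_k |c_k|^2 .
\end{equation*}

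The first claim then follows immediately by the triangle inequality:
\begin{equation*}
	|v^* \mathbf{A} v| \le \sum_{k=1}^N |\lambda_k|\,|c_k|^2 \le \Big(\max_{1\le k\le N} |\lambda_k|\Big)\sum_{k=1}^N |c_k|^2 = \rho(\mathbf{A})\,\norm{v}^2 .
\end{equation*}
For the statement about diagonal entries, specialize to $v = e_i$, the $i$-th standard basis vector: then $e_i^* \mathbf{A} e_i = A_{ii}$ and $\norm{e_i} = 1$, so $|A_{ii}| \le \rho(\mathbf{A})$.

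There is no genuine obstacle here — the result is essentially a restatement of the spectral theorem. The only points deserving a word of care are invoking that theorem to obtain a full orthonormal eigenbasis with real eigenvalues (valid precisely because $\mathbf{A}$ is Hermitian), and the bilinear-versus-sesquilinear bookkeeping, which is a non-issue in the real symmetric setting actually used in Section 5.4. An equivalent one-line argument uses the Rayleigh-quotient bounds $\lambda_{\min}(\mathbf{A}) \le v^* \mathbf{A} v / \norm{v}^2 \le \lambda_{\max}(\mathbf{A})$, whence $|v^* \mathbf{A} v| / \norm{v}^2 \le \max\!\big(|\lambda_{\min}(\mathbf{A})|,|\lambda_{\max}(\mathbf{A})|\big) = \rho(\mathbf{A})$; the explicit eigenexpansion above is simply the most self-contained route.
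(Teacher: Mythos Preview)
Your proof is correct. The paper states this proposition without proof, treating it as a standard linear algebra fact, so there is no approach to compare against; your spectral-theorem argument (or the equivalent Rayleigh-quotient formulation you note) is exactly the expected justification.
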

\noindent
We conclude that the variance of each individual fitted response is also bounded by $\sigma^2$,
\begin{equation*}
	\Var_{\mathbf{y}}\left( \hat{g}_i^{HS}(\mathbf{X})\right) \leq \sigma^2 \quad \forall i=1,\cdots,N
\end{equation*}

Now suppose the spatial predictors are corrupted by noise. While a large portion of research into manifold learning focus on only perturbations constrained to lie on the embedded submanifold $\phi(\mathcal{M})$, this is an unrealistic assumption as the feature map $\phi$ can involve any sort of physical, computational or otherwise non-mathematical process which introduce errors in the ambient space. However, for Hessian Eigenmaps the off-manifold component turns out to be insignificant.

In particular, suppose that for fixed $j$, instead of $\mathbf{x}_j$ we observe $\mathbf{x}_j^*=\mathbf{x}_j+\delta\mathbf{x}_j$ for some small $\delta\mathbf{x}_j\in\mathbb{R}^n$, which introduces errors in the computation of $\mathbf{H}^{(i)}$ for each $\mathbf{x}_i$ whose neighborhood contains $\mathbf{x}_j$. The perturbation can be split into parallel $\delta\mathbf{x}_j^{\parallel}$ and orthogonal components $\delta\mathbf{x}_j^{\perp}$ with respect to the tangent space $T_{\mathbf{x}_i}\mathcal{M}$. The first step is to perform PCA on $N(\mathbf{x}_i)$ to obtain the approximate tangent space $\hat{T}(\mathbf{x}_i)$ and the corresponding tangent coordinates $\mathbf{u}_k^{(i)}$. Already the projection of $\mathbf{x}_j^*$ onto the local tangent space kills off most of $\delta\mathbf{x}_j^{\perp}$. That is, as the affine subspace $\hat{T}(\mathbf{x}_i)$ probabilistically converges to $T_{\mathbf{x}_i}\mathcal{M}$ (in the sense of projection operators),
\begin{equation*}
	\norm{\mathbf{u}_j^{(i)\,*} - \mathbf{u}_j^{(i)}} = o_p(\delta\mathbf{x}_j^{\perp})
\end{equation*}

However, the presence of $\delta\mathbf{x}_j^{\perp}$ itself skews the fitted subspace $\hat{T}(\mathbf{x}_i)$, and since the neighborhood size $K$ is fixed, this effect cannot be ignored even asymptotically. If $\delta\mathbf{x}_j^{\perp}$ can be assumed small and roughly i.i.d. for all $j$, $\hat{T}(\mathbf{x}_i)$ should suffice as an unbiased estimate. If instead there are a few extreme outliers, robust PCA methods should be preferred and may even retrieve exact results (compared to no $\delta\mathbf{x}_j$ perturbation):

\begin{thm}[Robust PCA with PCP \cite{rpca}]
	Given a $m\times n$ matrix $M$, let $\norm{M}_*=\sum_i \sigma_i(M)$ be the nuclear norm and $\norm{M}_1=\sum_{ij} |M_{ij}|$ the 1-norm of $M$. Suppose $M$ admits a decomposition into a low-rank matrix $L_0$ and a sparse matrix $S_0$. Then, under suitably weak assumptions, the `Principal Component Pursuit' (PCP) estimate solving the convex problem
	\begin{equation*}
		\text{minimize } \norm{L}_* + \lambda \norm{S}_1 \quad \text{subject to } L+S=M
	\end{equation*}
exactly recovers $S_0$ and $L_0$.
\end{thm}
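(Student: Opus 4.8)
The plan is to recast exact recovery as the existence of a dual certificate for the convex program, and then to build that certificate by a randomized ``golfing'' argument. First I would fix the geometry: write the compact SVD $L_0 = U\Sigma V^{*}$, let $T = \{UX^{*} + YV^{*}\}$ be the tangent space at $L_0$ to the set of matrices of rank at most $r$, and let $\Omega = \supp(S_0)$, with $P_T$ and $P_{\Omega}$ the associated orthogonal projections. The incoherence hypothesis on $L_0$ says that $P_T$ is spread out relative to the coordinate frame, and the randomness of $\Omega$ makes it generic; together they yield the transversality estimate $\norm{P_{\Omega}P_T}_2 \le \sigma$ for a fixed $\sigma < 1$ with high probability. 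I would prove this by expressing $P_T - \rho^{-1} P_T P_{\Omega} P_T$ as a sum of independent, Bernoulli-indexed, self-adjoint terms of small norm and applying a matrix Bernstein inequality; this is what pins down the admissible rank in the hypotheses.

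Second, I would reduce optimality to a certificate. Using the subdifferentials of $\norm{\cdot}_{*}$ at $L_0$ (matrices $UV^{*} + W$ with $P_T(W) = 0$, $\norm{W}_2 \le 1$) and of $\lambda\norm{\cdot}_1$ at $S_0$ (matrices $\lambda\,\sgn(S_0) + F$ with $P_{\Omega}(F) = 0$, $\norm{F}_{\infty}\le \lambda$), I would show: if $\norm{P_{\Omega}P_T}_2 < 1/2$ and there exists a matrix $Y$ with $\norm{P_T(Y) - UV^{*}}_F \le \lambda/4$, $\norm{P_{T^{\perp}}(Y)}_2 \le 1/4$, $P_{\Omega}(Y) = \lambda\,\sgn(S_0)$ and $\norm{P_{\Omega^{\perp}}(Y)}_{\infty} \le \lambda/4$, then $(L_0,S_0)$ is the unique minimizer of $\norm{L}_{*} + \lambda\norm{S}_1$ subject to $L+S = M$. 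This is the standard convex-analysis argument: for any feasible perturbation $(L_0+H,\,S_0-H)$ one bounds the change in objective from below using the subgradient inequalities together with the reproducing role of $Y$, and the transversality bound makes that lower bound strictly positive unless $H = 0$.

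Third --- the technical heart --- I would construct $Y$. Split $Y = Y^{L} + Y^{S}$. For $Y^{S}$, take the matrix $\lambda\,\sgn(S_0)$ supported on $\Omega$: it meets the support constraint exactly, has $\norm{P_{\Omega^{\perp}}(Y^{S})}_{\infty} = 0$, has operator norm $O\!\big(\lambda\sqrt{\max(m,n)}\big) < 1$ by a random-sign-matrix bound (small because the sparsity fraction is a small constant and $\lambda = 1/\sqrt{\max(m,n)}$), and has $\norm{P_T(Y^{S})}_F$ small by incoherence; a Neumann-series refinement on $\Omega$ can sharpen these if needed. For $Y^{L}$, certify the residual $R := UV^{*} - P_T(Y^{S})$ by the golfing scheme: model $\Omega^{c}$ as a union of $j_0 \asymp \log\max(m,n)$ independent Bernoulli batches $\Omega_1,\dots,\Omega_{j_0}$ of rate $\rho$, set $Y_0 = 0$, and iterate
\begin{equation*}
	Y_k = Y_{k-1} + \rho^{-1} P_{\Omega_k}\big(R - P_T(Y_{k-1})\big), \qquad k = 1,\dots,j_0 ,
\end{equation*}
so that $R - P_T(Y_k)$ contracts geometrically in Frobenius norm --- each step multiplies it by $P_T - \rho^{-1}P_T P_{\Omega_k}P_T$, of norm below $1/2$ by matrix Bernstein --- reaching size $\le \lambda/4$ after $j_0$ rounds, while $Y^L := Y_{j_0}$ is automatically supported on $\Omega^{c}$. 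I would bound $\norm{P_{T^{\perp}}(Y^L)}_2$ by summing the increments, again via matrix Bernstein, and the entrywise quantity $\norm{P_{\Omega^{\perp}}(Y^L)}_{\infty}$ by scalar Bernstein with a union bound over the $mn$ entries, using a ``sign-elimination'' step --- conditioning on $|S_0|$ and exploiting the symmetry of $\sgn(S_0)$ --- to sharpen the $\ell_{\infty}$ estimates. With $\lambda = 1/\sqrt{\max(m,n)}$ all failure probabilities are summable, and adding the $Y^L$ and $Y^S$ bounds verifies the four certificate inequalities on one high-probability event.

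The main obstacle is exactly this last step: forcing the golfing construction and the support term to satisfy \emph{all four} certificate inequalities \emph{simultaneously} on a single event, which demands careful bookkeeping of the universal constants in the matrix concentration inequalities against the geometric contraction rate and the number of rounds $j_0$; this is where the precise ``suitably weak assumptions'' --- the incoherence parameter, the admissible rank, the admissible sparsity fraction, and the calibration $\lambda \asymp 1/\sqrt{\max(m,n)}$ --- get fixed. By contrast, the transversality estimate and the convex-duality reduction are routine once the concentration toolbox is in place. The complete argument is carried out in \cite{rpca}.
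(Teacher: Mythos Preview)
The paper does not prove this statement at all: it is quoted verbatim as a result from \cite{rpca}, followed only by a one-paragraph gloss on the hypotheses (``The assumptions involve identifiability conditions and bounds on the rank, singular decomposition and sparsity of the involved matrices \ldots''). There is therefore no in-paper proof to compare your proposal against.

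That said, your outline is a faithful sketch of the actual argument in \cite{rpca}: the reduction to a dual certificate via the subdifferentials of $\norm{\cdot}_*$ and $\norm{\cdot}_1$, the transversality bound $\norm{P_\Omega P_T}_2<1$ from matrix Bernstein, the golfing construction of $Y^L$ over $j_0\asymp\log\max(m,n)$ independent batches, and the calibration $\lambda=1/\sqrt{\max(m,n)}$ are exactly the ingredients of the original proof. So your proposal is correct and coincides with the source the paper defers to; it simply supplies far more than the paper itself does.
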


The assumptions involve identifiability conditions and bounds on the rank, singular decomposition and sparsity of the involved matrices. No prior knowledge of $\text{rank}(L_0)$ or $\supp(S_0)$ is assumed, the entries of $S_0$ can have arbitrarily large magnitude, and no tuning of $\lambda$ is required. Such methods should alleviate PCA's susceptibility to outliers at a relatively small computational cost.

Hence, we now discount the off-manifold error term and assume $\delta\mathbf{x}_j=\delta\mathbf{x}_j^{\parallel}$ so that the incurred error in the projected tangent coordinates satisfies $\lVert \delta\mathbf{u}_j^{(i)}\rVert\lesssim \norm{\delta\mathbf{x}_j}$. The main problem lies in the subsequent step of Gram-Schmidt orthogonalization. The centered coordinates $\mathbf{u}_k^{(i)}$ are themselves bound by the neighborhood size $\diam(U_{\mathbf{x}_i})$, which decreases at a speed of $N^{1/d}$ for fixed $K$ as more data is accumulated. One solution is to use variable $K$ and instead fix neighborhood size $\epsilon$. Nevertheless, the local nature of the Hessian estimator requires a neighborhood sufficiently small as to be approximately linear. Applying Gram-Schmidt to vectors of norm $\epsilon$ and $\epsilon^2$ is already quite unstable. Although modified Gram-Schmidt can reduce the propagation of error, the stability of the resulting $\mathbf{H}^{(i)}$ matrix, whose entries are $O_p(N^{2/d})$, cannot be assured.
Even if the orthogonalization process does not have a large effect (as is the case for small $K$ or low-dimensional data), the results are unreliable without error bounds such as
\begin{equation*}
	\lVert \delta\mathbf{u}_j^{(i)}\rVert < C\lVert \mathbf{u}_j^{(i)}\rVert \quad \forall i:N(\mathbf{x_i}) \ni \mathbf{x}_j \quad \text{for some } 0<C<1
\end{equation*}
Since $\norm{\delta\mathbf{x}_j}$ ostensibly does not depend on the other data points or the chosen neighborhood size, such an assumption cannot be made. However, the following example shows that the Hessian estimator is untrustworthy for perturbations on the scale of $\epsilon$.

Suppose $\mathcal{M}=\mathbb{R}$ and we are given three points $x_0<x_1<x_2$ with the corresponding response values. In this case the Hessian estimator at $x_1$ is simply the finite difference coefficient of order 2,
$$\mathbf{H}^{(1)}f = \hat{f}''(x_1)= \frac{2}{x_2-x_0} \left( \frac{f(x_2)-f(x_1)}{x_2-x_1} - \frac{f(x_1)-f(x_0)}{x_1-x_0}\right)$$
which is clearly sensitive to changes in each $x_i$. If for example $f(x_2)>f(x_0)>f(x_1)$ and $x_2$ is perturbed so that $x_0<x_2^*<x_1$, then $\hat{f}''(x_1)$ changes from a large positive value to a large negative value.

The saving grace is that these estimates are averaged across $N$ data points on the entire manifold to produce the final estimate $\hat{\mathcal{H}}^{\mathcal{M}}(\mathbf{X})$, reducing the total error. Note one corrupted predictor only influences at most $K$ point estimates. A more robust averaging scheme may also be used, ruling out those $\mathbf{H}^{(i)}$ with large norm, however this may render areas of high curvature harder to detect.

\subsection{Dimensionality}

There is a peculiarity concerning dimension in the above analysis of robustness. Contrary to the wide range of detrimental phenomena commonly termed the curse of dimensionality, an increase in dimension may benefit stability of our algorithm, or any finite difference based method -- a `blessing of dimensionality,' see e.g. \cite{bless}. This essentially occurs because perturbations in a bounded set will take place further from the origin on average. Consider a point $U$ randomly sampled from the uniform distribution on the $d$-dimensional unit ball $B^d(1)$. The expected norm $E\norm{U}$ is equal to $\frac{d}{d+1}$. More generally, for $N$ i.i.d. samples $U_1,\cdots,U_N$, the expected minimum distance is given by
$$\mathbb{E}\left( \min_{i=1\cdots N} \norm{U_i} \right) =\frac{N}{d} \frac{\Gamma(N)\Gamma(1/p)}{\Gamma(N+1+1/p)}$$
which converges to 1 as $d$ increases, even if $N$ also increases at a rate of $O(d)$. A similar result holds for the distance between samples. This separation serves to stabilize finite differences. To illustrate, suppose we have two data points $x_1,x_2$ in $\mathbb{R}^d$, with $x_1$ at the origin and $x_2$ infinitesimally close, and $g(x_1)=g(x_2)=0$. Assume $x_2$ is perturbed so that it is now located at $U$ uniformly sampled from $B^d(\delta)$, and both $g(x_i)$ values are also corrupted by noise $\epsilon_i\sim N(0,\sigma^2)$. For simplicity, we consider the empirical finite difference coefficient of order 1,
$$T=\frac{\epsilon_1-\epsilon_2}{\norm{U}}$$
Since $\norm{U}$ is distributed on $[0,1]$ with density $f_U(u)=du^{d-1}$, the following statistics are easily obtained, which highlight the stabilization of error at higher dimensions.

(i) If $d=1$, $\mathbb{E}\,|T|=\infty$ and $\mathbb{E}\,T$ is undefined. If $d\geq 2$, $\mathbb{E}\,|T|=\displaystyle \frac{d}{d-1}\frac{2\sigma}{\delta\sqrt{\pi}}$ and $\mathbb{E}\,T=0$.
	
(ii) If $d\leq 2$, $\Var(T)=\infty$. If $d\geq 3$, $\displaystyle \Var(T)= \frac{d}{d-2}\frac{2\sigma^2}{\delta^2}$.


\begin{thebibliography}{30}
	\bibitem{beatson}
	R. Beatson, W. Castell, 2018. \textit{Thinplate splines on the sphere.} SIGMA 14, 083.
	
	\bibitem{bez}
	A. Bezhaev, V. Vasilenko, 2001. Variational Theory of Splines. Kluwer Academic Publishers.
		
	\bibitem{hein}
	O. Bousquet, O. Chapelle, M. Hein, 2003. \textit{Measure Based Regularization.} NIPS 2003.
		
	\bibitem{rpca}
	E. Candes, X. Li, Y. Ma, J. Wright, 2011. \textit{Robust Principal Component Analysis?} J. Assoc. Comp. Mach, 58(3), 11, pp. 1-37.
		
	\bibitem{bbb}
	L. Charlap, 1986. Bieberbach Groups and Flat Manifolds. Springer.
	
	\bibitem{cheng}
	M. Cheng, H. Wu, 2013. \textit{Local linear regression on manifolds and its geometric interpretation.} J. Amer. Statist. Assoc., 108:504, pp. 1421-1434.
		
	\bibitem{bless}
	B. Christensen, 2021. \textit{The blessing of dimensionality for the analysis of climate data.} Nonlin. Processes Geophys., 28, pp. 409–422.
		
	\bibitem{alm}
	R. Christensen, 2019. Advanced Linear Modeling: Statistical Learning and Dependent Data. Springer Texts in Statistics. Springer.
	
	\bibitem{hlle}
	D. Donoho, C. Grimes, 2003. \textit{Hessian eigenmaps: locally linear embedding techniques for high-dimensional data.} Proc. Natl. Acad. Sci., 100 (10), pp. 5591-5596.
		
	\bibitem{duchon}
	J. Duchon, 1976. \textit{Splines minimizing rotation invariant semi-norms in Sobolev spaces.} pp. 85–100. In: Constructive Theory of Functions of Several Variables, Oberwolfach 1976, Lecture Notes in Math., 571, Springer, Berlin.
	
	\bibitem{tps}
	D. Eberly, 1996. Thin-Plate Splines -- Geometric Tools. Redmond WA 98052.
	
	https://www.geometrictools.com/Documentation/ThinPlateSplines.pdf
		
	\bibitem{freeden}
	W. Freeden, 1984. \textit{Spherical spline interpolation -- basic theory and computational aspects.} Comput. Appl. Math., 11, pp. 367-375.
		
	\bibitem{elastic}
	A. Gorban, A. Pitenko, A. Zinovyev, D. Wunsch, 2001. \textit{Visualization of any data with elastic map method.} Proceedings of Artificial Neural Networks in Engineering, St. Louis, MO.
		
	\bibitem{gu}
	C. Gu, 2002. Smoothing Spline ANOVA Models. Springer Series in Statistics, 297. Springer.
	
	\bibitem{gu}
	X. Gu, Y. He, H. Qin, 2006. \textit{Manifold splines.} Graphical Models, 68, pp. 237-254.
		
	\bibitem{htf}
	T. Hastie, R. Tibshirani, J. Friedman, 2009. The Elements of Statistical Learning: Data Mining, Inference, and Prediction (2nd edition). Springer.
	
	\bibitem{hinkle}
	J. Hinkle, P. Fletcher, S. Joshi, 2014. \textit{Intrinsic polynomials for regression on Riemannian manifolds.} J. Math. Imaging Vision, 50(1-2), pp. 32–52.
		
	\bibitem{hofer}
	M. Hofer, H. Pottmann, 2004. \textit{Energy-minimizing splines in manifolds.} ACM Transactions on Graphics, 23(3), pp. 284-293.
		
	\bibitem{jekeli}
	C. Jekeli, 2005. \textit{Spline representations of functions on a sphere for geopotential modeling.} Geodetic and Geoinformation Science. Report. no. 475, Ohio State University.
	
	\bibitem{jk}
	P. Jupp, J. Kent, 1987. \textit{Fitting smooth paths to spherical data.} Appl. Statist., 36, pp. 34–46.
	
	\bibitem{krkim}
	K. Kim, I. Dryden, 2021. \textit{Smoothing splines on Riemannian manifolds, with applications to 3D shape space.} J. R. Stat. Soc. Series B: Stat. Methodol., 83(1), pp. 108-132.
		
	\bibitem{bk2}
	M. Lai, L. Schumaker, 2007. \textit{Spherical splines.} In: Spline Functions on Triangulations, pp. 378-408. Cambridge University Press.
		
	\bibitem{linl}
	L. Lin, B. Thomas, H. Zhu, D. Dunson, 2017. \textit{Extrinsic local regression on manifold-valued data.} J. Amer. Statist. Assoc., 112(519), pp. 1261–1273.
		
	\bibitem{mlta}
	Y. Ma, Y. Fu, eds., 2012. Manifold Learning Theory and Applications. CRC Press.
		
	\bibitem{bk1}
	V. Michel, 2013. \textit{Spherical splines.} In: Lectures on Constructive Approximation. Applied and Numerical Harmonic Analysis, Birkh\"auser, Boston.
		
	\bibitem{morse}
	J. Milnor, 1963. Morse Theory. Annals of Mathematical Studies, 51. Princeton University Press.
	
	\bibitem{noakes}
	L. Noakes, G. Heinzinger, B. Paden, 1989. \textit{Cubic splines on curved surfaces.} IMA J. Math. Control Inf., 6, pp. 465–473.
		
	\bibitem{aosp}
	S. Orfanidis, 2007. Applied Optimum Signal Processing. McGraw-Hill Publishing Company, New York.
	
	\bibitem{pauley}
	M. Pauley, 2011. \textit{Jupp and Kent's cubics in Lie groups.} J. Math. Phys., 52(1), p. 9.
		
	\bibitem{sax}
	A. Saxena, A. Gupta, A. Mukerjee, 2004. \textit{Non-linear dimensionality reduction by locally linear Isomaps.} ICONIP 2004: Neural Information Processing, pp. 1038–1043.
	
	\bibitem{silver}
	B. Silverman, 1985. \textit{Some aspects of the spline smoothing approach to non-parametric regression curve fitting.} J. R. Stat. Soc. Series B: Stat. Methodol., 47(1), pp. 1-52.
	
	\bibitem{taylor}
	M. Taylor, 2011. Partial Differential Equations I. Applied Mathematical Sciences, 115. Springer.
	
	\bibitem{wahba}
	G. Wahba, 2003. \textit{An introduction to reproducing kernel hilbert spaces and why they are so useful.} IFAC Proceedings, 36(16), pp. 525-528.
\end{thebibliography}
\end{document}